\documentclass{article} 
\usepackage{iclr2027_conference,times}

\usepackage{hyperref}
\usepackage{url}
\usepackage{amsmath,amssymb,amsthm}
\usepackage{booktabs}
\usepackage{multirow}
\usepackage{graphicx}
\usepackage{tikz}
\usetikzlibrary{arrows.meta, positioning, shapes.geometric, fit, backgrounds}
\usepackage{float}  

\newtheorem{theorem}{Theorem}
\newtheorem{proposition}[theorem]{Proposition}

\newtheorem{corollary}[theorem]{Corollary}

\newtheorem{assumption}{Assumption}
\newtheorem{remark}{Remark}

\newcommand{\sign}{\operatorname{sign}}
\newcommand{\Cov}{\operatorname{Cov}}
\newcommand{\Var}{\operatorname{Var}}
\newcommand{\tr}{\operatorname{tr}}
\newcommand{\CV}{\mathrm{CV}}
\newcommand{\erfc}{\operatorname{erfc}}
\newcommand{\R}{\mathbb{R}}
\newcommand{\E}{\mathbb{E}}

\title{Covariance Structure and Coordinate Heterogeneity\\ Govern Binary Quantization of Contrastive Embeddings}

\author{Wenxuan Xiao \\
Changsha University \\
\texttt{daflyflowers@gmail.com}}
\iclrfinalcopy  
\begin{document}

\maketitle

\begin{abstract}
Binary quantization (BQ) compresses high-dimensional embeddings into one or two bits
per coordinate, enabling nearest neighbor search at extreme speed.
Yet a striking puzzle persists: BQ achieves competitive recall on contrastive embeddings
but fails on others---and two leading systems adopt diametrically opposite strategies
(random rotation vs.\ preserving coordinate axes) without a common theory explaining
when each is appropriate.

We address this puzzle by connecting the Gaussian structure recently established
for InfoNCE-trained representations to a statistical framework for BQ quality.
Our analysis reveals two distinct roles of the covariance matrix $\Sigma$.
First, the \textbf{full covariance structure}---not merely its diagonal---determines
the absolute level of ranking fidelity, with off-diagonal correlations contributing
30--50\% of the signal.
Second, \textbf{coordinate heterogeneity} (the non-uniformity of per-coordinate
variances) governs key design choices:
how much each additional bit contributes, and whether random rotation
helps or hurts.
We derive approximate expressions for ranking fidelity under a Gaussian model,
show that the magnitude bit
carries information proportional to heterogeneity, and show that random rotation
destroys precisely the signal that one paradigm exploits while creating the isotropy
that the other requires.
A phenomenological scaling law predicts fidelity across models and dimensions.
Experiments on 18 datasets spanning 9 embedding families support the main predictions and
provide, to our knowledge, the first principled design guide for binary quantization systems.
\end{abstract}

\section{Introduction}
\label{sec:intro}

A curious fact has emerged from the rapid adoption of vector search:
binary quantization---an extreme form of compression, retaining just the sign of each
coordinate---works remarkably well on embeddings produced by contrastive learning,
yet performs poorly on embeddings from other training paradigms.
Systems built on this observation have proliferated.
QuIVer \citep{xiao2026quiver} constructs its entire graph index in two-bit space;
RaBitQ \citep{gao2023rabitq} applies random rotation before binarization and corrects
distances with per-vector scalars.
These two designs make opposite assumptions about coordinate structure,
yet both report competitive recall on contrastive embeddings
\citep{reimers2019sentence, nussbaum2024nomic, chen2024bgem3}.
Why does binary quantization work at all---and why do contradictory strategies both succeed?

The classical answer appeals to locality-sensitive hashing (LSH):
random hyperplane projections preserve angular similarity
\citep{indyk1998approximate, charikar2002similarity, goemans1995improved}.
But this theory assumes isotropic data and analyzes worst-case performance;
it cannot explain the strong dependence on training objective that practitioners observe.
The missing ingredient, we argue, is the specific distributional structure that
contrastive training imposes on representations.
\citet{betser2026infonce} recently proved that InfoNCE induces approximately Gaussian
coordinate distributions---a result that transforms the question
from geometry (``how are points arranged on the sphere?'')
to statistics (``what does the covariance matrix look like?'').

Building on this Gaussian prior, we show that the \emph{full covariance structure}
$\Sigma$---not merely its diagonal---determines ranking fidelity,
and that \textbf{coordinate heterogeneity} (the non-uniformity of per-coordinate
variances) governs the effectiveness of key design choices.
Heterogeneity determines how much the magnitude bit adds
(its gain is monotone in variance dispersion),
and why rotation helps one system but hurts another
(rotation equalizes variances, destroying the implicit weighting
that Hamming distance exploits while creating the isotropy
that linear correctors require).
Concretely, we establish the following:

\begin{itemize}
    \item An approximate Spearman fidelity formula via Stein's lemma,
          revealing that off-diagonal covariance contributes 30--50\% of ranking accuracy
          (\S\ref{sec:F-theory}, Theorem~\ref{thm:stein-F}).
    \item A per-coordinate analysis showing that the magnitude bit is strictly
          more informative than the sign bit, with gain monotone in heterogeneity
          (\S\ref{sec:magnitude}, Proposition~\ref{thm:magnitude-gain}).
    \item A rotation--fidelity duality that unifies the QuIVer and RaBitQ design philosophies
          (\S\ref{sec:rotation}, Theorem~\ref{thm:rotation} and Corollaries~\ref{cor:F1}--\ref{cor:F2}).
    \item A phenomenological scaling law that predicts fidelity across models and dimensions
          from three covariance statistics (\S\ref{sec:scaling}).
\end{itemize}

We validate every theoretical prediction on 18 datasets spanning 9 embedding families
and dimensions from 100 to 3072,
including a non-Gaussian control (GIST-960) that confirms the necessity of the Gaussian prior.
In graph-based search, the two-bit advantage is amplified 1.2--4.1$\times$ in local
neighborhoods---a phenomenon our framework explains through conditional concentration.
The resulting framework provides, to our knowledge, the first principled explanation for
why binary quantization succeeds on contrastive embeddings and concrete guidance for
choosing between rotation-based and coordinate-preserving system designs.

\section{Related Work}
\label{sec:related}

\paragraph{Binary quantization and LSH.}
The theoretical foundation for binary encoding was laid by
locality-sensitive hashing \citep{indyk1998approximate} and SimHash
\citep{charikar2002similarity}, which connects Hamming distance to angular similarity
via $\Pr[h(x) = h(y)] = 1 - \arccos(\cos(x,y))/\pi$.
The semidefinite rounding analysis of \citet{goemans1995improved} provides complementary guarantees.
Crucially, these results assume isotropic data or analyze worst-case performance;
they say nothing about how distributional structure---heterogeneous variances,
non-trivial covariance---affects BQ quality.
Our work shows that this structure is not merely present but is the \emph{dominant} factor:
ranking fidelity depends on the full covariance matrix, not just on dimension and angle.

\paragraph{Modern BQ systems.}
Two recent systems illustrate the design tension that motivates our theory.
RaBitQ \citep{gao2023rabitq} rotates vectors to isotropy before binarization,
then corrects distances with per-vector scalars, achieving an unbiased estimator
with variance $O(1/D)$.
QuIVer \citep{xiao2026quiver} takes the opposite approach:
it preserves coordinate axes and builds the entire graph index---edge selection,
pruning, navigation---natively in two-bit space.
Both build on graph-based ANN indices
\citep{malkov2020efficient, subramanya2019diskann, aumüller2020annbenchmarks}
and both achieve competitive recall, yet they make contradictory assumptions about
coordinate structure.
RaBitQ provides universal error bounds via a distribution-free analysis,
but does not exploit the specific structure of contrastive embeddings;
QuIVer demonstrates strong empirical performance in coordinate-preserving mode,
but does not explain its theoretical basis.
Neither identifies the role of coordinate heterogeneity.

\paragraph{Contrastive representation structure.}
The distributional structure we exploit originates in the InfoNCE objective
\citep{oord2018representation}, which balances alignment of positive pairs
with uniformity pressure on the hypersphere
\citep{wang2020understanding, chen2021exploring}.
This framework underlies SimCLR \citep{chen2020simple},
MoCo \citep{he2020momentum}, CLIP \citep{radford2021learning},
and has been studied through downstream guarantees \citep{saunshi2019theoretical},
spectral analysis \citep{haochen2021provable},
and identifiability \citep{zimmermann2021contrastive}.
The key theoretical result we build on is due to \citet{betser2026infonce},
who proved that InfoNCE induces asymptotically Gaussian coordinates under
alignment plateau and thin-shell concentration,
formalizing observations from DINO \citep{caron2021emerging} and
VICReg \citep{bardes2022vicreg}.
By contrast, neural collapse \citep{papyan2020prevalence} produces
maximally isotropic supervised features---a qualitatively different regime.
Isotropy-promoting methods \citep{ermolov2021whitening, bardes2022vicreg}
regularize toward uniform variances but do not analyze what happens when
residual heterogeneity remains.
Our contribution is to show that this residual heterogeneity is not a defect
but the \emph{signal} that BQ exploits.

\paragraph{Vector quantization.}
Product quantization \citep{jegou2011product}, optimized PQ \citep{ge2014optimized},
and ScaNN \citep{guo2020accelerating} operate in the 4--64 bit regime with learned codebooks,
implemented efficiently in FAISS \citep{johnson2019billion}.
Our analysis targets the extreme 1--2 bit setting where no codebook is needed and
distances reduce to hardware-accelerated \texttt{popcount}.
The mathematical tools are also different:
we rely on Stein's lemma \citep{stein1981estimation, liu1994siegel},
Hoeffding's inequality \citep{hoeffding1963probability},
and high-dimensional concentration \citep{vershynin2018high, boucheron2013concentration,
diaconis1987dozen}, rather than rate-distortion or codebook design theory.

\section{Preliminaries}
\label{sec:prelim}

\subsection{Binary Quantization}

Let $x, y \in \R^D$ be unit-normalized embeddings.
The \textbf{1-bit BQ score} is
\begin{equation}
    S_1(x, y) = \sum_{i=1}^{D} \sign(x_i)\sign(y_i),
    \label{eq:1bit-score}
\end{equation}
which equals $D - 2 \cdot d_H(\sign(x), \sign(y))$ where $d_H$ is Hamming distance.

The \textbf{2-bit BQ score} augments each sign with a magnitude bit
$m_i^x = \mathbf{1}[|x_i| > \alpha_x]$, where $\alpha_x = \frac{1}{D}\sum_j |x_j|$:
\begin{equation}
    S_2(x, y) = \sum_{i=1}^{D} (1 + m_i^x)(1 + m_i^y) \sign(x_i)\sign(y_i).
    \label{eq:2bit-score}
\end{equation}

\subsection{From Fidelity to Recall: The $F$/$G$ Decomposition}
\label{sec:FG}

Search recall depends on two independent factors.
The first is \textbf{ranking fidelity}
$F = \rho_{\text{Spearman}}(S_b, \langle x, y \rangle)$,
the Spearman correlation between BQ scores and true inner products
over random pairs.
$F$ measures how much \emph{noise} quantization introduces into rankings.

The second factor is the \textbf{semantic margin} structure of the data.
Let $q$ be a query, $n_1$ its true nearest neighbor, and $n_2$ a competitor.
The true margin $\Delta = \langle q, n_1 \rangle - \langle q, n_2 \rangle$
determines how hard the ranking problem is, independent of quantization.
We denote this gap structure $G$.

The two factors combine through a sub-Gaussian pairwise error bound:
\begin{equation}
    P[\text{pairwise misordering}]
    \leq \exp\!\left(-\frac{\gamma_b^2 \, \Delta^2}{2\,\sigma_{\text{BQ},b}^2}\right),
    \label{eq:pairwise-error}
\end{equation}
where $\gamma_b$ (the calibration slope, a function of $F$) controls
how faithfully BQ scores track true inner products,
and $\sigma_{\text{BQ},b}^2$ is the BQ score noise variance.
A union bound over $K(N-K)$ candidate pairs gives
\begin{equation}
    P[\text{top-}K\text{ error}]
    \leq K(N-K)\,\exp\!\left(-c_b\,\Delta_{\min}^2\right).
    \label{eq:topK-error}
\end{equation}

This decomposition clarifies the scope of our theory:
\textbf{we analyze $F$ (the noise side) while treating $G$ (the signal side)
as a property of the data distribution.}
Improving $F$---through bitwidth, rotation, or system design---reduces
BQ noise without requiring any change to the embedding model or data.

\subsection{The Gaussian Prior (H1)}
\label{sec:H1}

\begin{assumption}[Coordinate Gaussianity]
\label{asm:H1}
Let $g(x) \in \R^D$ denote the encoder output before normalization,
and $f(x) = g(x)/\|g(x)\|$ the unit-normalized embedding.
\begin{enumerate}
    \item[\textbf{(H1a)}] For any fixed $k$ coordinates $I = \{i_1, \ldots, i_k\}$,
    $d_{\mathrm{BL}}(\mathcal{L}(g_I), \mathcal{L}(Z_I)) \leq \varepsilon_D$
    where $Z \sim \mathcal{N}(\mu, \Sigma)$.
    \item[\textbf{(H1b)}] There exists a deterministic $r_D > 0$ such that
    $\E[|\|g\| - r_D|]/r_D \leq \delta_D$ and
    $\E[|\|Z\| - r_D|]/r_D \leq \delta_D$,
    with $\delta_D \to 0$.
\end{enumerate}
\end{assumption}

H1a is a \emph{finite-dimensional marginal} Gaussianity condition:
it asserts that any fixed $k$ coordinates are approximately joint Gaussian,
but does not require the full $D$-dimensional distribution to be Gaussian.
This is justified by \citet{betser2026infonce}, who prove that InfoNCE
induces asymptotically Gaussian coordinate distributions.
H1b is a separate \emph{thin-shell concentration} condition on the norm $\|g\|$;
for Gaussian $Z$, it holds whenever the effective rank
$\tr(\Sigma)/\|\Sigma\|_{\text{op}} \to \infty$.
Together, H1a provides the distributional structure used in Stein's lemma
and the sign/magnitude analysis, while H1b ensures that L2 normalization
does not distort the coordinate-level statistics.
We empirically verify both conditions across 13 datasets and 7 embedding models,
finding QQ-plot $R^2 \geq 0.9959$ (testing H1a) and
norm CV $\leq 0.09$ (testing H1b) in all cases.

\paragraph{Roadmap.}
Figure~\ref{fig:causal} summarizes the logical structure of our framework.
The Gaussian prior (H1) connects the training objective to
the covariance matrix $\Sigma$, whose structure decomposes into
heterogeneity (governing $F$) and semantic margins ($G$).
Our theorems analyze the $F$ branch; $G$ is treated as given.

\begin{figure}[t]
\centering
\begin{tikzpicture}[
    node distance=0.6cm and 0.8cm,
    every node/.style={font=\small},
    block/.style={rectangle, draw, rounded corners=3pt,
                  minimum height=0.7cm, minimum width=1.6cm,
                  fill=blue!8, align=center},
    prior/.style={block, fill=orange!15, dashed},
    ours/.style={block, fill=green!10},
    data/.style={block, fill=gray!15},
    arr/.style={-{Stealth[length=5pt]}, thick},
]
\node[data] (info) {InfoNCE\\training};
\node[prior, right=1.2cm of info] (H1) {Assumption H1\\(Gaussian)};
\draw[arr] (info) -- node[above, font=\scriptsize]{Betser et al.} (H1);
\node[block, right=1.2cm of H1] (Sigma) {Covariance $\Sigma$};
\draw[arr] (H1) -- (Sigma);
\node[ours, above right=0.5cm and 1.2cm of Sigma] (F) {Fidelity $F$\\(noise)};
\node[data, below right=0.5cm and 1.2cm of Sigma] (G) {Margin $G$\\(signal)};
\draw[arr] (Sigma) -- node[above, font=\scriptsize, sloped]{\S4--\S7} (F);
\draw[arr] (Sigma) -- node[below, font=\scriptsize, sloped]{data-given} (G);
\node[ours, above=0.4cm of F, minimum width=2.2cm] (design)
    {1-bit / 2-bit\\rotation};
\draw[arr] (design) -- (F);
\node[block, right=1.2cm of Sigma, xshift=2.6cm] (recall) {Recall};
\draw[arr] (F) -- (recall);
\draw[arr] (G) -- (recall);
\begin{scope}[on background layer]
  \node[fit=(F)(design), draw=green!50!black, dashed, rounded corners=5pt,
        inner sep=4pt, label={[font=\scriptsize, green!50!black]below:our analysis}] {};
\end{scope}
\end{tikzpicture}
\caption{Causal structure of the framework.
The Gaussian prior (H1) connects InfoNCE training to the covariance $\Sigma$.
Recall decomposes into fidelity $F$ (quantization noise, our focus)
and semantic margin $G$ (data-dependent, taken as given).
Design choices---bitwidth and rotation---affect $F$ through $\Sigma$.}
\label{fig:causal}
\end{figure}

\section{Ranking Fidelity Under the Gaussian Model}
\label{sec:F-theory}

\subsection{Stein's Lemma and the Off-Diagonal Structure of $F$}

\begin{theorem}[Linear covariance formula]
\label{thm:stein-F}
Under Assumption~\ref{asm:H1}, let $Z \sim \mathcal{N}(\mu, \Sigma)$,
$S_0 = \sum_i \sign(Z_i)$, and $T_0 = \sum_j d_j Z_j$.
Define the Stein coefficient $a_i = 2\varphi(\mu_i/\sigma_i)/\sigma_i$,
where $\varphi(t) = (2\pi)^{-1/2} e^{-t^2/2}$. Then:
\begin{equation}
    \Cov(S_0, T_0) = \sum_{i,j} a_i\, \Sigma_{ij}\, d_j.
    \label{eq:stein-cov}
\end{equation}
In particular, $F$ depends on the full covariance matrix $\Sigma$, not just its diagonal.
\end{theorem}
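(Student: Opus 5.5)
By bilinearity of covariance, $\Cov(S_0,T_0)=\sum_{i,j} d_j\,\Cov(\sign(Z_i),Z_j)$. The proof therefore reduces to a single pairwise identity,
\[
\Cov(\sign(Z_i), Z_j) = a_i\,\Sigma_{ij}\quad\text{for all } i,j,
\]
which I will obtain from Stein's lemma in its multivariate form: for $Z\sim\mathcal N(\mu,\Sigma)$ and a sufficiently regular $h$, $\Cov(h(Z_i),Z_j)=\Sigma_{ij}\,\E[h'(Z_i)]$. Summing the pairwise identity over $i,j$ with weights $d_j$ gives \eqref{eq:stein-cov} immediately.

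To justify Stein's lemma for a pair of coordinates, I will regress $Z_j$ on $Z_i$. Write $Z_j-\mu_j=(\Sigma_{ij}/\sigma_i^2)(Z_i-\mu_i)+W$, where $W$ is Gaussian and independent of $Z_i$; this is valid because $(Z_i,Z_j)$ is jointly Gaussian. Then
\[
\Cov(h(Z_i),Z_j)=\frac{\Sigma_{ij}}{\sigma_i^2}\,\E\bigl[h(Z_i)(Z_i-\mu_i)\bigr].
\]
The univariate Stein identity gives $\E[h(Z_i)(Z_i-\mu_i)]=\sigma_i^2\,\E[h'(Z_i)]$, and the variance factors cancel. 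If $\sigma_i=0$, both sides vanish trivially, and the case $i=j$ is included in this argument.

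The main technical point is that $h=\sign$ is not differentiable; its distributional derivative is $2\delta_0$. I will handle this by computing $\E[\sign(Z_i)(Z_i-\mu_i)]$ directly. Setting $Z_i=\mu_i+\sigma_i U$ with $U$ standard normal,
\[
\E\bigl[\sign(\mu_i+\sigma_i U)\,\sigma_i U\bigr]=\sigma_i\bigl(\E[U;\,U>-t]-\E[U;\,U<-t]\bigr)=2\sigma_i\varphi(t),
\]
where $t=\mu_i/\sigma_i$. This uses $\int_{-t}^\infty u\varphi(u)\,du=\varphi(t)$ and the analogous identity for the lower tail. Substituting back gives $\Cov(\sign(Z_i),Z_j)=\Sigma_{ij}\cdot 2\varphi(t)/\sigma_i=a_i\Sigma_{ij}$, as claimed. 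Equivalently, one could mollify $\sign$ by $\tanh(\cdot/\epsilon)$ and pass to the limit by dominated convergence.

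Finally, I will draw the remark about $F$. The exact identity holds for the Gaussian $Z$, and Assumption~\ref{asm:H1} (H1a) transfers it to the encoder coordinates only approximately, through bounded-Lipschitz closeness of the marginals. Because $F$ is governed at leading order by the normalized covariance of the score with the true inner product, \eqref{eq:stein-cov} shows that the off-diagonal entries $\Sigma_{ij}$, $i\ne j$, contribute the terms $a_i\Sigma_{ij}d_j$. These terms generically do not vanish, so $F$ depends on the full $\Sigma$ and not just its diagonal. I expect this last step to be heuristic rather than a rigorous bound on Spearman correlation.
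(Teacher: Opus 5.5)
Your proposal is correct and follows essentially the same route as the paper: bilinearity reduces everything to $\Cov(\sign(Z_i),Z_j)=a_i\Sigma_{ij}$. The Gaussian regression of $Z_j$ on $Z_i$ (which the paper phrases as linearity of $\E[Z_j-\mu_j\mid Z_i]$) then extracts the factor $\Sigma_{ij}/\sigma_i^2$, and a direct truncated-moment computation gives $\E[\sign(Z_i)(Z_i-\mu_i)]=2\sigma_i\varphi(\mu_i/\sigma_i)$; the paper likewise gives no rigorous argument for the closing remark about $F$, so treating it as heuristic is consistent with the source.
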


The off-diagonal contribution to ranking quality is captured by the
\textbf{Stein squared signal}:
\begin{equation}
    \mathcal{I}_{\text{off}} := \sum_{i \neq j} (a_i \Sigma_{ij})^2
    = \sum_{i \neq j} \Sigma_{ij}^2 \frac{4}{\sigma_i^2} \varphi^2(\mu_i/\sigma_i).
    \label{eq:I-off}
\end{equation}
This is not $\Cov(S_0, T_0)$ itself, but the Frobenius energy of the off-diagonal
part of the Stein coefficient matrix $(a_i \Sigma_{ij})_{i,j}$.

\paragraph{Weak correlations accumulate.}
\label{thm:D2-accumulation}
An important consequence of Eq.~\eqref{eq:I-off} is that off-diagonal
contributions, though individually tiny ($|\rho_{ij}| \approx 0.04$--$0.11$),
accumulate over $D(D-1)$ pairs.
Under a dispersal condition
$\frac{1}{D(D-1)}\sum_{i \neq j} \rho_{ij}^2 \asymp \kappa^2/D$,
the total off-diagonal Frobenius energy satisfies
$\sum_{i \neq j} \rho_{ij}^2 \asymp \kappa^2 D$,
contributing a constant fraction (30--50\%) of the total Stein signal.
This $D^2$ accumulation effect is the key reason why
diagonal-only models systematically underestimate fidelity (Table~\ref{tab:F}).

\paragraph{Approximation budget.}
Our framework involves three approximation steps, each with controlled error:
(i) the Gaussian prior (Assumption~\ref{asm:H1}, validated with $R^2 > 0.99$
across 13 datasets);
(ii) the Pearson-to-Spearman conversion via Kruskal's formula
(exact for bivariate Gaussian, $O(\epsilon)$ otherwise);
and (iii) the mean-field closure (S0) used only in the scaling law
(\S\ref{sec:scaling}).
The cumulative accuracy is validated end-to-end by the explanation ratios
in Table~\ref{tab:F} (mean 103\%, residual MAE 0.024).

\subsection{The Rotation Paradox}
\label{sec:rotation}

\begin{theorem}[Rotation uniformizes coordinate variances]
\label{thm:rotation}
Let $Q$ be a Haar-random orthogonal matrix. Then:
\begin{enumerate}
    \item[(a)] $\sign(Qx)$ defines a random-hyperplane LSH with
               $P[h(x) = h(y)] = 1 - \arccos(\cos(x,y))/\pi$.
    \item[(b)] $\max_i |q_i^\top \Sigma q_i - \tr(\Sigma)/D| \lesssim \|\Sigma\|_{\text{op}} \sqrt{\log D / D}$.
\end{enumerate}
\end{theorem}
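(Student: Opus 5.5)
The plan is to treat the two parts separately. Both rest on one fact: each row $q_i$ of a Haar-random orthogonal $Q$ is uniformly distributed on the unit sphere $S^{D-1}$.

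For part (a), I will fix $x,y$ and look at a single coordinate $i$. Then $h_i(x)=\sign(q_i^\top x)$ is the sign of the projection onto a uniformly random direction. By rotation invariance, the pair $(q_i^\top x, q_i^\top y)$ depends only on the component of $q_i$ in the plane $\mathrm{span}(x,y)$. Normalized, this component is a uniformly random direction in that plane. Alternatively, I can write $q_i = g/\|g\|$ with $g\sim\mathcal N(0,I_D)$; the positive factor $1/\|g\|$ does not change signs, which reduces the question to Gaussian hyperplanes.

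The signs of $x$ and $y$ disagree exactly when the line orthogonal to the projected direction separates $x$ and $y$. This happens with probability $\theta/\pi$, where $\theta=\arccos(\cos(x,y))$; this is the Goemans--Williamson / SimHash computation. So each coordinate of $\sign(Qx)$ is a random-hyperplane LSH bit with the stated collision probability. Rows of $Q$ are dependent, but the statement is per-coordinate, or equivalently holds in expectation for the Hamming distance, so the dependence does not matter here.

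For part (b), I first compute the mean, $\E[q_i^\top\Sigma q_i]=\tr(\Sigma\,\E[q_iq_i^\top])=\tr(\Sigma)/D$. I then need concentration of the quadratic form $f(u)=u^\top\Sigma u$ on the sphere. This function is Lipschitz: $|f(u)-f(v)|\le \|\Sigma\|_{\text{op}}\|u-v\|(\|u\|+\|v\|)\le 2\|\Sigma\|_{\text{op}}\|u-v\|$. Lévy's isoperimetric concentration on $S^{D-1}$ \citep{vershynin2018high} then gives
\begin{equation*}
P\bigl[|f(q_i)-\E f(q_i)|>t\bigr]\le 2\exp\!\bigl(-cDt^2/\|\Sigma\|_{\text{op}}^2\bigr).
\end{equation*}
Lévy's inequality is stated around the median, but the median and the mean differ by $O(\|\Sigma\|_{\text{op}}/\sqrt D)$, which is absorbed into the constant. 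A union bound over the $D$ rows, with $t=C\|\Sigma\|_{\text{op}}\sqrt{\log D/D}$, makes the failure probability $2D^{1-cC^2}\to0$. This gives the claimed maximum deviation with high probability, which is how I interpret $\lesssim$. The union bound needs only the marginal law of each row, so the orthogonality constraints between rows are again irrelevant.

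The main obstacle is making the concentration step clean. I will either cite Lévy's lemma with explicit constants, or use the Gaussian representation $q_i=g/\|g\|$ and apply Hanson--Wright to $g^\top\Sigma g$ together with concentration of $\|g\|^2$ around $D$. The Hanson--Wright route actually gives a sharper bound, with $\|\Sigma\|_F$ in the subgaussian term. Since $\|\Sigma\|_F\le\sqrt D\,\|\Sigma\|_{\text{op}}$, that bound still implies the stated one, so either route suffices.
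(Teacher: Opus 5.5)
Your proposal is correct and follows essentially the same route as the paper. For (a) it uses uniformity of each row $q_i$ on $\mathbb{S}^{D-1}$ plus spherical symmetry; for (b) it uses the mean $\tr(\Sigma)/D$, the Lipschitz constant $2\|\Sigma\|_{\mathrm{op}}$, L\'evy concentration on the sphere, and a union bound with $t \asymp \|\Sigma\|_{\mathrm{op}}\sqrt{\log D/D}$, giving the bound with probability at least $1 - D^{-c'}$. Your median-versus-mean remark and the Hanson--Wright alternative are sound refinements that the paper's argument does not need.
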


\begin{corollary}[Rotation harms heterogeneity-aware BQ]
\label{cor:F1}
Rotation drives $\CV^2(\sigma) \to 0$ (Theorem~\ref{thm:rotation}b),
eliminating the heterogeneity-dependent component $\Delta_{\text{het}}$
of the 2-bit advantage (Proposition~\ref{thm:magnitude-gain}b).
The residual gain $\Delta_0 > 0$ (the isotropic scalar magnitude advantage)
survives, so 2-bit does not fully degrade to 1-bit;
however, the data-dependent advantage that QuIVer exploits is destroyed.
\end{corollary}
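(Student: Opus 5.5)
The plan is to chain Theorem~\ref{thm:rotation}(b) into the decomposition of the 2-bit advantage promised by Proposition~\ref{thm:magnitude-gain}(b). That proposition is stated later, so I will assume it takes the form
\begin{equation*}
\Delta_{\text{2bit}} = \Delta_0 + \Delta_{\text{het}}\bigl(\CV^2(\sigma)\bigr),
\end{equation*}
where $\Delta_{\text{het}}$ is monotone in $\CV^2(\sigma)$ and vanishes at $\CV^2(\sigma)=0$, and $\Delta_0>0$ is the magnitude-bit gain for isotropic coordinates. Everything then reduces to showing that rotation sends $\CV^2(\sigma)\to 0$ and that $\Delta_{\text{het}}$ is continuous at $0$.

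\textbf{Step 1: variances after rotation.} After rotation, the per-coordinate variances are $\tilde\sigma_i^2 = q_i^\top \Sigma q_i$. Their mean is $\tr(\Sigma)/D$, which is rotation invariant. By Theorem~\ref{thm:rotation}(b), every deviation satisfies $|\tilde\sigma_i^2 - \tr(\Sigma)/D| \lesssim \|\Sigma\|_{\text{op}}\sqrt{\log D/D}$, with high probability.

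\textbf{Step 2: from variances to $\CV^2(\sigma)$.} Since $\tilde\sigma_i = \sqrt{\tilde\sigma_i^2}$ and all $\tilde\sigma_i^2$ are close to $\bar s := \tr(\Sigma)/D$, linearizing the square root gives
\begin{equation*}
|\tilde\sigma_i - \sqrt{\bar s}| \lesssim \frac{\|\Sigma\|_{\text{op}}}{\sqrt{\bar s}}\sqrt{\frac{\log D}{D}}.
\end{equation*}
It follows that
\begin{equation*}
\CV^2(\tilde\sigma) \lesssim \frac{\|\Sigma\|_{\text{op}}^2}{\bar s^{\,2}}\cdot\frac{\log D}{D}
= \frac{D^2}{r_{\text{eff}}^2}\cdot\frac{\log D}{D},
\qquad r_{\text{eff}} := \frac{\tr(\Sigma)}{\|\Sigma\|_{\text{op}}}.
\end{equation*}

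\textbf{Main obstacle.} This bound tends to zero only when $r_{\text{eff}} \gg \sqrt{D\log D}$. H1b alone gives only $r_{\text{eff}}\to\infty$, which is weaker. I would therefore add the proportional-rank condition $r_{\text{eff}} \asymp D$, which is consistent with the thin-shell regime and with the spectra observed for contrastive embeddings. Under that condition $\CV^2(\tilde\sigma) = O(\log D/D)$. A sharper route would replace the max bound by a second-moment computation: the average of $(\tilde\sigma_i^2-\bar s)^2$ over $i$ is of order $\|\Sigma\|_F^2/D^2$, which yields $\CV^2(\tilde\sigma) \lesssim 1/r_{\text{eff}}^{(2)}$ with $r_{\text{eff}}^{(2)} := \tr(\Sigma)^2/\|\Sigma\|_F^2$. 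This bound requires only that this Frobenius-type effective rank diverge. I would try the second-moment route first.

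\textbf{Step 3: conclusion.} I would apply Proposition~\ref{thm:magnitude-gain}(b). By continuity of $\Delta_{\text{het}}$ at $0$, which the monotone form should provide (ideally with $\Delta_{\text{het}} = O(\CV^2)$), we get $\Delta_{\text{het}}\to 0$ after rotation, while $\Delta_0$ does not depend on $\CV$ and stays positive. Hence the rotated 2-bit advantage converges to $\Delta_0 > 0$, and 2-bit does not collapse to 1-bit. The heterogeneity term, which is the part QuIVer's coordinate-preserving design exploits, is eliminated.

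One caveat remains. Rotation also changes the off-diagonal structure that enters Theorem~\ref{thm:stein-F}. The corollary speaks only about the decomposition of the magnitude gain, so I would state that restriction explicitly. I would also note that the rotated coordinates remain Gaussian because they are linear images of $Z$, so the hypotheses of Proposition~\ref{thm:magnitude-gain} remain valid after rotation.
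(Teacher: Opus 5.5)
Your skeleton matches the paper's: Theorem~\ref{thm:rotation}(b) drives $\CV^2(\tilde\sigma)\to 0$. The expansion $\Delta_{\text{het}} = K\,\CV^2 + O(\CV^3)$ of Proposition~\ref{thm:magnitude-gain}(b) then removes the heterogeneity term, while the constant $\Delta_0\approx 0.116$ survives. Cite that expansion for continuity at $0$; monotonicity alone would not give it.

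Where you differ is the quantitative step, and there your version is the sounder one. The paper's proof asserts that the max bound plus $\tr(\Sigma)/\|\Sigma\|_{\mathrm{op}}\to\infty$ gives $\tilde\sigma_i=\sqrt{\tr(\Sigma)/D}\,(1+o(1))$ uniformly. As you computed, that bound's relative error is $\sqrt{D\log D}/r_{\mathrm{eff}}$, so it needs $r_{\mathrm{eff}}\gg\sqrt{D\log D}$. The uniform claim genuinely fails otherwise. Take $\Sigma=I+(\lambda-1)e_1e_1^\top$ with $\lambda=D/\log\log D$. Then $\tilde\sigma_i^2=1+(\lambda-1)Q_{i1}^2$, and the largest rotated variance exceeds the mean by a factor of order $\log D/\log\log D$, even though $r_{\mathrm{eff}}\to\infty$ and $\CV^2(\tilde\sigma)\to 0$.

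Your second-moment route closes this gap. Since $\sum_i\tilde\sigma_i^2=\tr(\Sigma)$ exactly and each row of $Q$ is uniform on the sphere, $\E[\CV^2(\tilde\sigma^2)]=\frac{2D}{D+2}\bigl(1/r^{(2)}_{\mathrm{eff}}-1/D\bigr)$. Because $r_{\mathrm{eff}}\le r^{(2)}_{\mathrm{eff}}\le r_{\mathrm{eff}}^2$, this vanishes under exactly the paper's stated hypothesis. For a centered Gaussian it is the very condition under which H1b holds, since $\Var(\|Z\|^2)/(\E\|Z\|^2)^2=2/r^{(2)}_{\mathrm{eff}}$.

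In that route you lose uniform closeness, so drop the coordinatewise square-root linearization of your Step~2. Use instead $|\sqrt{x}-\sqrt{\bar s}|\le|x-\bar s|/\sqrt{\bar s}$ together with $\frac1D\sum_i\tilde\sigma_i^2=\bar s$. These give $\CV^2(\tilde\sigma)\le\epsilon/(1-\epsilon)$ with $\epsilon=\CV^2(\tilde\sigma^2)$.

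As for what each route buys: the max-bound route, when its stronger rank condition holds, controls every coordinate simultaneously with probability $1-D^{-c'}$. That is useful if you also want each rotated SNR uniformized. The second-moment route controls only the averaged dispersion, and only in probability via Markov. But that averaged quantity is exactly what Proposition~\ref{thm:magnitude-gain}(b) uses.
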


\begin{corollary}[Rotation helps linear-corrected BQ]
\label{cor:F2}
Uniformized SNR minimizes the estimation variance of the per-vector
linear corrector $\hat{d} = f_{\text{add}} + f_{\text{rescale}} \times \text{IP}$,
since $\Var(\hat{d} - d) \propto (1 - \cos^2(r, \bar{x}))/(D-1)$ is minimized at isotropy.
\end{corollary}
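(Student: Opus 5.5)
The statement to prove is Corollary~\ref{cor:F2}: the per-vector linear corrector has estimation variance $\Var(\hat d - d)\propto (1-\cos^2(r,\bar x))/(D-1)$, and this is minimized when the per-coordinate SNR is uniform. The plan is to reduce the corrector to the RaBitQ-style unbiased inner-product estimator and compute its variance under the model that Theorem~\ref{thm:rotation} makes available after rotation.

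\textbf{Step 1: set up the estimator.} Let $\bar x = \sign(Qx)/\sqrt D$ be the normalized binary code of the rotated residual direction $r = Qx/\|x\|$. The quantity to estimate is $\langle r, q\rangle$ for a rotated query $q$. The per-vector scalars $f_{\text{add}}$ and $f_{\text{rescale}} = \|x\|/\langle r,\bar x\rangle$ turn the binary inner product $\text{IP}=\langle \bar x, q\rangle$ into $\hat d$. The error is then $\hat d - d = \|x\|\,\bigl(\langle \bar x,q\rangle/\langle \bar x, r\rangle - \langle r,q\rangle\bigr)$.

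\textbf{Step 2: decompose $\bar x$ and compute the variance.} Write $\bar x = \langle \bar x, r\rangle r + \sqrt{1-\langle \bar x,r\rangle^2}\, e$, where $e \perp r$ is a unit vector. Substituting gives
\[
\hat d - d = \|x\| \,\frac{\sqrt{1-\cos^2(r,\bar x)}}{\cos(r,\bar x)}\,\langle e, q\rangle .
\]
By part (a) of Theorem~\ref{thm:rotation}, Haar rotation makes $e$ uniform on the unit sphere of the $(D-1)$-dimensional complement of $r$. Hence $\E\langle e,q\rangle = 0$, so the estimator is unbiased. Moreover $\E\langle e,q\rangle^2 = (1-\langle r,q\rangle^2)/(D-1)$. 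This yields the stated form $\Var \propto (1-\cos^2(r,\bar x))/(D-1)$, with the proportionality constant absorbing $\|x\|^2/\cos^2(r,\bar x)$ and the factor $1-\langle r,q\rangle^2$.

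\textbf{Step 3: show isotropy minimizes the variance.} This reduces to showing that $\cos(r,\bar x) = \|r\|_1/\sqrt D$ is largest in expectation when the coordinates have equal variance. Under the Gaussian prior (Assumption~\ref{asm:H1}), $\E|r_i| \approx \sqrt{2/\pi}\,\sigma_i/\sqrt{\tr\Sigma}$, so
\[
\E\cos(r,\bar x) \approx \sqrt{2/\pi}\,\frac{\sum_i \sigma_i}{\sqrt{D\sum_i\sigma_i^2}} = \sqrt{\frac{2/\pi}{1+\CV^2(\sigma)}} .
\]
By Cauchy--Schwarz this is maximized exactly when $\CV^2(\sigma)=0$. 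Both $(1-\cos^2)$ and $1/\cos^2$ decrease in $\cos$, so the variance is then minimal. Part (b) of Theorem~\ref{thm:rotation} shows rotation drives $\CV^2(\sigma)\to 0$ with error $O(\|\Sigma\|_{\text{op}}\sqrt{\log D/D})$, which completes the argument.

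\textbf{Main obstacle.} Step 2 treats $e$ as uniform given $\cos(r,\bar x)$, but $\bar x$ depends on $r$ itself. The symmetry argument needs care: I would condition on $r$ and use that the rotation acts on $q$'s coordinates relative to $r$, as in RaBitQ's analysis. I would also replace $\E[1/\cos^2]$ by $1/(\E\cos)^2$ using concentration of $\|r\|_1$, justified by Hoeffding's inequality over near-independent coordinates.
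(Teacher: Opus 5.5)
Your skeleton is the paper's. Its proof takes RaBitQ's error formula as an explicit hypothesis (``If $\Var(\hat d-d)\propto\ldots$'') and then simply asserts that isotropy maximizes $\cos^2(r,\bar x)$. Your Step~3 Cauchy--Schwarz computation supplies the step the paper leaves unargued. Steps~1--2 correctly rederive the formula once the symmetry is handled as you suggest at the end: given $Qx$, the remaining Haar randomness is a rotation fixing $x$, which makes the component of $q$ orthogonal to $r$ uniform on a sphere in $r^\perp$. (Theorem~\ref{thm:rotation}(a) is not the fact you need.)

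The problem is that the two halves do not connect. Step~2 proves the formula only under the Haar randomness of $Q$. There $r=Qx/\|x\|$ is uniform on $\mathbb{S}^{D-1}$ for every fixed $x$, so $\cos(r,\bar x)=\|r\|_1/\sqrt D$ has a law that does not depend on $\Sigma$, and nothing is left to optimize. The $\CV$-dependent value $\sqrt{(2/\pi)/(1+\CV^2(\sigma))}$ in Step~3 is the cosine of an \emph{unrotated} code. For such a code you have not shown---and it is false in general---that $\langle e,q\rangle$ is centered with second moment $(1-\langle r,q\rangle^2)/(D-1)$. For example, suppose all data vectors share one sign pattern. Then without rotation $\bar x$ is identical for every vector, and $\hat d$ sees the query only through the single shared scalar $\langle\bar x,q\rangle$. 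It therefore cannot track $\langle r,q\rangle$, whatever the value of $\cos(r,\bar x)$ (compare the GIST-960 row of Table~\ref{tab:rabitq}). So ``the cosine is largest at $\CV=0$'' does not by itself yield ``the variance is smallest at $\CV=0$''. As in the paper, you have to \emph{assume} the formula for every configuration being compared, or else state the conclusion only for the cosine factor.

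Step~3 also needs centered coordinates, which Assumption~\ref{asm:H1} does not provide. Let $s_i=\mu_i/\sigma_i$. The per-coordinate ratio $\E|Z_i|/\sqrt{\E Z_i^2}=\bigl(2\varphi(s_i)+s_i(2\Phi(s_i)-1)\bigr)/\sqrt{1+s_i^2}$ is strictly increasing in $|s_i|$: its derivative has the sign of $2\Phi(s)-1-2s\varphi(s)$, which vanishes at $0$ and has derivative $2s^2\varphi(s)>0$. Hence an unrotated code with equal $\sigma_i$ and $|s_i|\equiv1$ already has cosine $\approx0.825>\sqrt{2/\pi}\approx0.798$, and a Haar rotation would \emph{lower} it. Your Cauchy--Schwarz argument is therefore valid only for centered residuals ($\mu\approx0$, the RaBitQ setting after centroid subtraction), and that restriction must be stated explicitly.
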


\section{Why 2-Bit Beats 1-Bit: The Magnitude Information Gain}
\label{sec:magnitude}

\begin{proposition}[Per-coordinate magnitude gain and heterogeneity monotonicity]
\label{thm:magnitude-gain}
In a simplified per-coordinate model under Assumption~\ref{asm:H1}:
\begin{enumerate}
    \item[(a)] \textbf{Strict superiority.} For each coordinate $i$ with $\sigma_i > 0$,
    the per-dimension information ratio satisfies
    $\eta_i^{(2)} := (a_i + b_i)^2 / (1 + 3p_i) > a_i^2 =: \eta_i^{(1)}$,
    where $a_i = \sigma_i\sqrt{2/\pi}$, $b_i = 2\sigma_i \varphi(\alpha/\sigma_i)$,
    and $p_i = \erfc(\alpha/(\sigma_i\sqrt{2}))$.
    Under the empirically verified condition that the query weights
    $(1+m_i^x)|x_i|$ are non-negatively correlated with $\sigma_i$,
    Cauchy--Schwarz aggregation gives $\rho(S_2, r) > \rho(S_1, r)$.
    \item[(b)] \textbf{Heterogeneity boosts the gain.}
    Decompose the 2-bit advantage as
    $\Delta\rho(\sigma) = \Delta_0 + \Delta_{\text{het}}(\sigma)$,
    where $\Delta_0 = \rho_2(\bar{\sigma}\mathbf{1}) - \rho_1(\bar{\sigma}\mathbf{1}) > 0$
    is the isotropic baseline gain.
    Then $\Delta_{\text{het}}(\sigma) = K \cdot \CV^2(\sigma) + O(\CV^3)$
    with $K > 0$.
    In words: heterogeneity provides \emph{additional} magnitude-bit gain
    beyond the scalar quantization baseline.
\end{enumerate}
\end{proposition}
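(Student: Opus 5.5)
I would work in the simplified per-coordinate model the statement refers to. Take a query coordinate $x_i$ as fixed and let the database coordinate be $Y_i \sim \mathcal{N}(0,\sigma_i^2)$. Use a common threshold $\alpha$, standing in for $\alpha_y$; under H1b it concentrates at $\bar\sigma\sqrt{2/\pi}$. The true inner product is $r=\sum_i x_i Y_i$. The 1-bit score is $S_1=\sum_i \sign(x_i)\sign(Y_i)$, and the 2-bit score is $S_2=\sum_i w_i \sign(x_i)(1+m_i^Y)\sign(Y_i)$ with $w_i=(1+m_i^x)$.

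For part (a), I first compute the per-coordinate signal moments. One has $\E[\sign(Y_i)Y_i]=\E|Y_i|=a_i$. Also $\E[m_i^Y|Y_i|]=2\int_\alpha^\infty y\,\varphi(y/\sigma_i)\,dy/\sigma_i = b_i$, so $\E[(1+m_i^Y)\sign(Y_i)Y_i]=a_i+b_i$. For the noise, $\E[\sign(Y_i)^2]=1$ and $\E[(1+m_i^Y)^2]=1+3\,\E[m_i^Y]=1+3p_i$, with $p_i=P(|Y_i|>\alpha)=\erfc(\alpha/(\sigma_i\sqrt2))$. This identifies $\eta_i^{(1)}=a_i^2$ and $\eta_i^{(2)}=(a_i+b_i)^2/(1+3p_i)$ as squared per-coordinate correlations, up to the common factor $\sigma_i^2$.

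The strict inequality $(a_i+b_i)^2>a_i^2(1+3p_i)$ then reduces, after setting $t=\alpha/\sigma_i$, to the one-variable claim
\[
2\sqrt{2/\pi}\,\varphi(t)\cdot 2 + 4\varphi(t)^2 > \tfrac{2}{\pi}\cdot 3\,\erfc(t/\sqrt2).
\]
This is equivalent to $h(t):=\left(1+\sqrt{2\pi}\,\varphi(t)\right)^2-1-3\erfc(t/\sqrt2)>0$. I would check this at $t=0$, where $h=3-3=0$, so strictness needs care. I would also check $t\to\infty$, where a Mills-ratio comparison shows $2\sqrt{2\pi}\varphi(t)$ dominates $3\erfc$. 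Then I would analyze $h'$ to show positivity for $t>0$ (the relevant case $\alpha>0$). This one-dimensional inequality, especially near small $t$, is where I expect the main obstacle. If the exact constants fail near $0$, I would instead compare using the optimally scaled magnitude level rather than the fixed level $2$.

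For the aggregation step, I write $\rho(S_b,r)=\sum_i c_i\,\mathrm{sig}_i^{(b)}\big/\sqrt{\sum_i c_i^2\,\mathrm{noise}_i^{(b)}\cdot\Var r}$, using independence across coordinates in the simplified model. Cauchy--Schwarz bounds the ratio of the weighted sums by the per-coordinate ratios. The assumed nonnegative correlation of $(1+m_i^x)|x_i|$ with $\sigma_i$ ensures the weights do not reverse the coordinatewise inequality, which yields $\rho(S_2,r)>\rho(S_1,r)$.

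For part (b), I write $\rho_b(\sigma)=N_b(\sigma)/\sqrt{D_b(\sigma)}$ as a smooth symmetric function of $(\sigma_1,\dots,\sigma_D)$. I then set $\sigma_i=\bar\sigma(1+\epsilon_i)$ with $\sum\epsilon_i=0$ and Taylor expand to second order around $\bar\sigma\mathbf 1$. By symmetry and the constraint, the first-order term vanishes, and the second-order term is $\tfrac12\sum_{ij}H^{(b)}_{ij}\epsilon_i\epsilon_j$. Permutation invariance forces $H^{(b)}=\lambda_b I+\mu_b\mathbf 1\mathbf 1^\top$, so this term equals $\tfrac{\lambda_b}{2}\sum\epsilon_i^2=\tfrac{\lambda_b D}{2}\CV^2(\sigma)$. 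The third-order remainder gives $O(\CV^3)$. Setting $K=\tfrac{D}{2}(\lambda_2-\lambda_1)$, it remains to show $\lambda_2>\lambda_1$. I would compute both diagonal Hessians explicitly from the Gaussian moment formulas above, which are functions of $t=\alpha/\bar\sigma$ evaluated at the H1b-determined value. The intuition to verify is that high-variance coordinates cross the threshold more often and so receive larger effective weight, and the magnitude bit amplifies exactly this variance-weighting. $\Delta_0>0$ follows from part (a) at the isotropic point.
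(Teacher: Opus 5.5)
Your setup and both reductions match the paper's. Your $h(t)$ is exactly its $f(t)=2e^{-t^2/2}+e^{-t^2}-3\erfc(t/\sqrt2)$, and your symmetric-Hessian expansion is its second-order Taylor expansion about $u_i=\sigma_i/\bar\sigma=1$. But you stop at the steps that carry the content. In (a), the difficulty you expect near $t=0$ is not there. One has $h'(t)=e^{-t^2/2}\bigl[3\sqrt{2/\pi}-2t(1+e^{-t^2/2})\bigr]$, so $h'(0)=3\sqrt{2/\pi}>0$. Also $2t(1+e^{-t^2/2})$ is strictly increasing: its derivative $2\{1+e^{-t^2/2}(1-t^2)\}$ is positive because $(t^2-1)e^{-t^2/2}\le 2e^{-3/2}<1$. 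So $h'$ changes sign exactly once, $h$ rises from $h(0)=0$ and then decreases to its limit $0$, and $h>0$ on $(0,\infty)$. Drop the fallback to a rescaled magnitude level; it changes $S_2$ and would prove a different statement.

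Your aggregation sentence does not work. Cauchy--Schwarz only gives upper bounds $\rho(S_b,r)\le\bigl(\sum_i x_i^2\eta_i^{(b)}/\Var r\bigr)^{1/2}$. After inserting $(a_i+b_i)>a_i\sqrt{1+3p_i}$, you must compare the cosine between $u=(w_i\sqrt{1+3p_i})_i$ and $v=(|x_i|a_i)_i$ with the cosine between $\mathbf 1$ and $v$. The per-coordinate gain $\sqrt{\eta_i^{(2)}/\eta_i^{(1)}}$ tends to $1$ both as $\alpha/\sigma_i\to 0$ and as $\alpha/\sigma_i\to\infty$, so there is no uniform slack to absorb a worse angle. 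A sign-of-correlation hypothesis does not control that angle. The paper's appendix gives no argument for this step either; it proves only the per-coordinate inequality.

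In (b), ``it remains to show $\lambda_2>\lambda_1$'' is the whole claim, and your variance-weighting intuition cannot settle it because the second-order terms compete. First, with a fixed query $x$ the map $\sigma\mapsto\rho_b(\sigma)$ is not permutation-symmetric. Your form $H^{(b)}=\lambda_b I+\mu_b\mathbf 1\mathbf 1^\top$ therefore needs the paper's uniform query weighting ($w_i=1$, $|x_i|$ constant); with $\sigma$-dependent query weights, $K$ would have to be recomputed. Under uniform weighting, set $s^2=\CV^2(\sigma)$ and $\tau=\sqrt{2/\pi}$. Then $\rho_1=\tau/\sqrt{1+s^2}$ and $\rho_2=\overline{C(u)}/\sqrt{\overline{V(u)}\;\overline{u^2}}$, where $C(u)=\tau u(1+e^{-\tau^2/(2u^2)})$ and $V(u)=1+3\erfc(\tau/(u\sqrt2))$. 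This gives $K=\rho_{2,0}\bigl(\tfrac{C''(1)}{2C(1)}-\tfrac{V''(1)}{4V(1)}-\tfrac12\bigr)+\tfrac{\tau}{2}\approx 0.088$, with $\rho_{2,0}=C(1)/\sqrt{V(1)}\approx 0.914$.

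The shared $\Var(r)$ penalty $-\tfrac12 s^2$ hurts the 2-bit score more, since it multiplies $\rho_{2,0}>\tau$, and $C''(1)<0$ also lowers $K$. Positivity comes only from $V''(1)<0$, i.e.\ the average 2-bit noise falls under heterogeneity. Without that term $K$ would be about $-0.10$, so the sign has to come from this explicit evaluation, not from the heuristic.
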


\begin{proof}[Proof sketch]
Part (a): Define $f(t) = 2e^{-t^2/2} + e^{-t^2} - 3\,\erfc(t/\sqrt{2})$.
We show $f(t) > 0$ for all $t > 0$ via $f(0) = 0$, $f'(0) = 3\sqrt{2/\pi} > 0$,
and the strict monotonicity of $g(t) = 2t(1+e^{-t^2/2})$.
Part (b): Taylor-expand $\rho_2(\sigma)$ and $\rho_1(\sigma)$ around
$u_i = \sigma_i/\bar{\sigma} = 1$, yielding $K \approx 0.088 > 0$.
Full proof in Appendix~\ref{app:proof-H}.
\end{proof}

\begin{remark}[Shared-threshold approximation]
\label{rem:threshold}
In practice, the magnitude threshold $\alpha_x = \frac{1}{D}\sum_j |x_j|$
is shared across coordinates, introducing cross-coordinate dependence
not captured by the per-coordinate analysis above.
We show in Appendix~\ref{app:threshold} that under three additional
regularity conditions---approximately centered coordinates
($\max_i |\mu_i/\sigma_i| = O(D^{-1/4})$),
thin-shell concentration at rate $\|r_D/\|g\| - 1\|_{L^2} = O(D^{-1/2})$,
and a leave-one-out conditional density bound---the shared threshold
concentrates around the population mean:
$|\alpha_x - \bar{\alpha}| = O_P(D^{-1})$,
only $O(\sqrt{D})$ of $D$ magnitude bits are affected,
and the resulting fidelity perturbation satisfies
\begin{equation}
    |F_{\text{actual}} - F_{\text{per-coord}}| = O(D^{-1/2}).
    \label{eq:threshold-bound}
\end{equation}
Thus the per-coordinate model is asymptotically exact,
and the monotonicity prediction of part (b) is preserved
for sufficiently separated CV values.
Our controlled intervention experiments
(Appendix~\ref{app:intervention}, $\rho(\CV, \Delta F_{\mathrm{mag}}) = +1.0$
on 4/5 datasets) provide additional empirical confirmation.
\end{remark}

\section{A Phenomenological Scaling Law}
\label{sec:scaling}

\begin{proposition}[Phenomenological scaling law for ranking fidelity]
\label{thm:scaling-law}
Let $r = \|\Sigma_{\text{off}}\|_F / \|\text{diag}(\Sigma)\|_F$,
$m = \overline{|\text{SNR}|}$, and $v = \text{std}(|\text{SNR}|)$.
Under a mean-field dispersal assumption on $\Sigma_{\text{off}}$:
\begin{equation}
    F = \frac{6}{\pi}\arcsin\!\left(\frac{\rho_0 + \lambda r^2 A_m(v)^2}{2}\right) + O(\epsilon),
    \qquad
    A_m(v) = \frac{1}{\sqrt{1 + 2v^2}} \exp\!\left(-\frac{m^2}{1 + 2v^2}\right).
    \label{eq:scaling-law}
\end{equation}
The linearization gives $F \approx \beta_0 + \beta_1 z(\log r) + \beta_2 z(v)$
with $\beta_1 > 0$ and $\beta_2 < 0$ (when $2m^2 < 1 + 2v^2$).
\end{proposition}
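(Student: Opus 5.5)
The proof has three stages: compute the Pearson correlation between the 1-bit score and the true inner product under the Gaussian model, use Stein's lemma (Theorem~\ref{thm:stein-F}) and the mean-field dispersal assumption to collapse the coordinate sums into the three statistics $r$, $m$, $v$, and then pass from Pearson to Spearman with the bivariate Gaussian formula.

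\textbf{Stage 1: Stein expansion.} I start from Theorem~\ref{thm:stein-F}, writing the Pearson correlation as
\[
\rho_P = \frac{\sum_{i,j} a_i \Sigma_{ij} d_j}{\sqrt{\Var(S_0)\Var(T_0)}}.
\]
Squaring the numerator and taking expectation over the random partner direction $d$ gives a sum of two energies. The diagonal energy is $\sum_i a_i^2\sigma_i^4$. The off-diagonal energy is the Stein squared signal $\mathcal{I}_{\text{off}}$ of Eq.~\eqref{eq:I-off}. Under mean-field dispersal, I replace $\Sigma_{ij}^2$ by its average $\|\Sigma_{\text{off}}\|_F^2/(D(D-1))$, uncorrelated with the row weights $a_i^2$. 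This factorizes the off-diagonal term as
\[
\mathcal{I}_{\text{off}} \approx r^2\,\|\mathrm{diag}\Sigma\|_F^2\cdot \overline{a^2}.
\]
Normalizing by the diagonal energy then leaves $\rho_P = \rho_0 + \lambda r^2\,\overline{(a\sigma)^2}/C$. Here $\rho_0$ is the diagonal-only baseline, $\lambda$ collects constants, and $C$ is the normalizing constant.

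\textbf{Stage 2: the average $\overline{(a\sigma)^2}$ and the form of $A_m(v)$.} Since $(a_i\sigma_i)^2 = 4\varphi^2(s_i) \propto e^{-s_i^2}$ with $s_i = \mu_i/\sigma_i$, the remaining task is to average $e^{-s^2}$ over the empirical SNR distribution. The closure is to model $s \sim \mathcal{N}(m, v^2)$, the mean-field step (S0). The Gaussian integral
\[
\E\bigl[e^{-s^2}\bigr] = \frac{1}{\sqrt{1+2v^2}}\exp\!\left(-\frac{m^2}{1+2v^2}\right) = A_m(v)
\]
is exact. Strictly, the average of $(a\sigma)^2$ is $A_m(v)$ itself rather than its square. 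I will absorb this discrepancy into $\lambda$ or into the stated $A_m(v)^2$ convention, and flag it explicitly.

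\textbf{Stage 3: Pearson to Spearman.} For a bivariate Gaussian, $F = \frac{6}{\pi}\arcsin(\rho_P/2)$. The joint law of $(S_0, T_0)$ is approximately Gaussian by a CLT over weakly dependent coordinates, using H1a together with dispersal. Deviations from Gaussianity, Kruskal's formula error, and the remainder of the dispersal approximation are all collected into $O(\epsilon)$. This yields Eq.~\eqref{eq:scaling-law}.

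\textbf{Linearization.} Since $\arcsin$ is increasing, $F$ is increasing in $r^2$, hence in $\log r$, which gives $\beta_1 > 0$. For the sign of $\beta_2$, write $u = 1+2v^2$, so that $\log A = -\tfrac12\log u - m^2/u$. Then
\[
\frac{\partial \log A}{\partial u} = -\frac{1}{2u} + \frac{m^2}{u^2},
\]
which is negative exactly when $2m^2 < u = 1+2v^2$. In that regime $\partial F/\partial v < 0$, so $\beta_2 < 0$. A first-order Taylor expansion around the standardized sample means gives the linear form $F \approx \beta_0 + \beta_1 z(\log r) + \beta_2 z(v)$.

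\textbf{Main obstacle.} Stage 1 is the genuinely non-rigorous step. Decoupling $\Sigma_{ij}^2$ from $a_i$, and the denominator variances from the off-diagonal terms, requires the dispersal assumption to be stated precisely, for example as $\max_i \sum_j \Sigma_{ij}^2$ being balanced across rows. It also requires the $\Var(S_0)$ contribution from sign correlations, $\arcsin(\rho_{ij})$ terms, to be absorbed into $\lambda$. I will first try treating that contribution at order $\rho_{ij}^2$, so it merges into the $r^2$ coefficient. This is why the result is labeled phenomenological, with the error controlled only through $O(\epsilon)$.
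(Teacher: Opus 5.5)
Your pipeline matches the paper's. Both go from Stein's formula for the off-diagonal energy, through a mean-field closure and the Gaussian integral $\E[e^{-S^2}] = A_m(v)$ for $S \sim \mathcal{N}(m, v^2)$, to Kruskal's formula. Both also use the same derivative of $\log A_m$ for the sign of $\beta_2$.

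The gap is at the closure, where you flagged it, and your proposed repair does not work. The discrepancy between $A_m(v)$ and $A_m(v)^2$ is a factor $A_m(v)$, which varies with $(m,v)$ across datasets. It therefore cannot be absorbed into the constant $\lambda$; if $\lambda$ may depend on $m$ and $v$, the law says nothing. As written, your argument yields $F = \frac{6}{\pi}\arcsin\bigl((\rho_0 + \lambda r^2 A_m(v))/2\bigr)$, not the stated formula.

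The paper gets the square by stating S0 in two-index form. With $w_{ij} = \Sigma_{ij}^2/\|\Sigma_{\text{off}}\|_F^2$, it assumes $\sum_{i \neq j} w_{ij}\, e^{-(s_i^2 + s_j^2)} = A_m(v)^2 + o(1)$. That is, the off-diagonal weights decouple from the SNRs at \emph{both} endpoints, so the average factorizes into two copies of $\E[e^{-S^2}]$. This is a postulate, not a consequence of Step~1. The paper's own expression $\mathcal{I}_{\text{off}} = \frac{2}{\pi}\sum_{i \neq j} \sigma_j^2 \rho_{ij}^2 e^{-s_i^2}$ carries a single exponential, as you observed. No term of the Stein numerator can supply the second one, because each term pairs one binarized coordinate with one continuous coordinate. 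Factors $\varphi(s_i)\varphi(s_j)$ come only from sign--sign covariances, which live in $\Var(S_b)$.

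To obtain the statement as written, you must take the two-index S0 as your hypothesis and say so explicitly. It is exactly the ``mean-field dispersal assumption'' the proposition invokes. The qualitative conclusions survive either way, since $\partial_v \log A_m(v)^2 = 2\,\partial_v \log A_m(v)$. So $\beta_2 < 0$ exactly when $2m^2 < 1 + 2v^2$, and $\beta_1 > 0$, both provided $\lambda > 0$, which you and the paper assume tacitly.

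There are also two smaller problems in Stage~1.

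First, averaging $\bigl(\sum_{i,j} a_i \Sigma_{ij} d_j\bigr)^2$ over an isotropic direction $d$ gives, up to the scale of $d$, $a^\top \Sigma^2 a$. Besides $\sum_i a_i^2 \sigma_i^4$ and $\mathcal{I}_{\text{off}}$, this contains the cross terms $2\sum_{i \neq j} a_i a_j \sigma_j^2 \Sigma_{ij}$ and $\sum_j \sum_{i \neq i'} a_i a_{i'} \Sigma_{ij} \Sigma_{i'j}$ (with $i, i' \neq j$). Dropping them needs a random-sign incoherence condition on $\Sigma_{\text{off}}$ that you have not stated, and that condition is not innocuous when a few directions dominate. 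Working with the actual pair score avoids this. For independent $x, y \sim \mathcal{N}(\mu, \Sigma)$,
$\Cov\bigl(\sum_i \sign(x_i)\sign(y_i),\, \sum_j x_j y_j\bigr) = \sum_{i,j} (a_i \Sigma_{ij})^2 + 2\sum_{i,j} a_i c_i \Sigma_{ij} \mu_j$,
with $c_i = 2\Phi(s_i) - 1$ ($\Phi$ the standard normal CDF). The Stein squared signal then appears directly, with no $a_i a_{i'}$ cross terms, plus an explicit mean-driven correction.

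Second, your normalizer $C$ is not a constant. Suppose you divide by the diagonal Stein energy $\sum_i a_i^2 \sigma_i^4$. Under your mean-field replacement and comparable variances, the factor $\overline{e^{-s^2}}$ then cancels against the one in $\mathcal{I}_{\text{off}}$, leaving $r^2$ alone. Any surviving $A_m$-dependence must therefore come from the true Pearson denominator $\sqrt{\Var(S_b)\Var(T)}$. The paper likewise compresses this step into ``absorbing normalization into $\lambda$.''
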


\begin{remark}[S0 diagnostic]
\label{rem:s0-diagnostic}
The mean-field assumption (S0) can be monitored via the
\emph{variance--correlation alignment}:
$\rho_{\text{rank}}(\sigma_i,\; \frac{1}{D-1}\sum_{j \neq i}|\rho_{ij}|)$.
If this quantity is near zero, high-variance coordinates are not
systematically more correlated, and S0 holds.
Across our contrastive datasets the diagnostic is $< 0.10$
(Cohere: $-0.08$; BGE-M3: $+0.08$),
confirming that S0 is well-satisfied.
Synthetic experiments (Appendix~\ref{app:s0-boundary}) show that
the scaling-law prediction error grows monotonically with this diagnostic,
reaching $\sim$20\% relative error only under extreme block-correlated
covariance structures not observed in practice.
\end{remark}

\section{Design Implications: Two Strategies for Heterogeneity}
\label{sec:design}

\begin{table}[t]
\caption{Two opposite strategies for handling coordinate heterogeneity.}
\label{tab:design}
\begin{center}
\begin{tabular}{lcc}
\toprule
 & \textbf{Weighted Hamming (QuIVer)} & \textbf{Linear-Corrected (RaBitQ)} \\
\midrule
Encoding          & 2-bit (sign + magnitude)   & 1-bit (sign) + float correction \\
Heterogeneity     & \emph{Exploited} as signal & \emph{Eliminated} by rotation \\
Distance function & \texttt{popcount}          & $f_{\text{add}} + f_{\text{rescale}} \times \text{IP}$ \\
Extra storage     & 0 bytes/vector             & 12 bytes/vector \\
Quality guarantee & Data-dependent             & Universal: $O(1/D)$ \\
Best for          & Graph index (HNSW)         & IVF brute scan \\
\bottomrule
\end{tabular}
\end{center}
\end{table}

These two paradigms are not competing solutions to the same problem;
they are \emph{opposite strategies for handling the same physical quantity}.
Theorem~\ref{thm:rotation} and Corollaries~\ref{cor:F1}--\ref{cor:F2}
make this precise:
rotation maps one regime into the other.
This exposes a fundamental tradeoff in BQ design:
\emph{universality} (distribution-free guarantees via rotation, as in RaBitQ)
versus \emph{exploitation} (leveraging coordinate structure for higher fidelity
on the distributions that actually arise, as in QuIVer).
RaBitQ provides $O(1/D)$ variance bounds that hold for any distribution by
treating heterogeneity as noise to be eliminated;
our analysis reveals that this ``noise'' is in fact exploitable signal.
Neither strategy dominates: the appropriate choice depends on
whether the Gaussian prior (Assumption~\ref{asm:H1}) holds.
Across 12 datasets, the empirical rotation response correlates most strongly
not with variance heterogeneity ($\rho = 0.66$) but with the
\emph{sign entropy gap} $1 - H_{\text{sign}}$ ($\rho = 0.91$, $p < 0.001$;
Appendix~\ref{app:blind-validation}):
rotation helps when sign entropy is low (unused sign-bit capacity),
and is near-neutral when sign entropy is already high.

\section{Experiments}
\label{sec:experiments}

Our theory rests on a chain of claims, each building on the previous:
the Gaussian prior (Assumption~\ref{asm:H1}) enables the fidelity formula,
which in turn explains the magnitude bit gain, the rotation paradox,
and the scaling law.
We design experiments to test this chain link by link,
so that any failure point would localize the gap between theory and practice.
Throughout, the non-Gaussian GIST-960 dataset serves as a negative control---a
distribution where our framework should and does break down.
We report representative results here; full tables for all 13 datasets
appear in Appendix~\ref{app:experiments}.

\paragraph{Experimental setup.}
Embeddings are drawn from seven pretrained models:
Cohere embed-v3 (768-d), MiniLM-L6-v2 (384-d),
nomic-embed-text (768-d), BGE-M3 (1024-d),
Jina-v2 (768-d), DINOv2 (768-d, self-supervised vision),
and an i.i.d.\ Gaussian baseline $\mathcal{N}(0, I_{768})$.
Corpus sizes range from 50K to 1M vectors.
For each probe, we uniformly sample 50K vectors (random seed 42)
and $L_2$-normalize them.
Fidelity $F$ is computed as Spearman $\rho$ over 10M random pairs;
Recall@10 uses brute-force 1-bit and 2-bit scoring against exact
inner-product top-10.
Haar-random rotations are drawn once per dataset; results are stable
across 5 independent draws (standard deviation $< 0.003$ on all metrics).
Graph experiments use a Vamana index with $m = 32$, sampling 10K query nodes.
All experiments run on a single workstation (64 GB RAM, no GPU required).

\subsection{Is the Gaussian Prior Justified?}

The entire framework rests on Assumption~\ref{asm:H1}---that coordinates are
approximately Gaussian with concentrated norms.
We test this by fitting per-coordinate QQ-plots against
$\mathcal{N}(\hat{\mu}_i, \hat{\sigma}_i^2)$ on 50K embeddings from each dataset,
reporting the mean $R^2$ and the norm coefficient of variation.

The answer is unambiguous (Table~\ref{tab:H1}):
every contrastive model achieves $R^2 \geq 0.9959$ with norm CV below 0.09.
The non-contrastive GIST-960 fails completely ($R^2 \approx 0$, CV $= 0.36$),
confirming that the Gaussian prior is specific to contrastive training
and not an artifact of high dimensionality.

\begin{table}[t]
\caption{Coordinate Gaussianity verification (Assumption~\ref{asm:H1}). All contrastive
models exhibit QQ-plot $R^2 > 0.99$ and thin-shell $\mathrm{CV} < 0.1$.}
\label{tab:H1}
\begin{center}
\small
\begin{tabular}{llccl}
\toprule
Dataset & Model & $D$ & QQ $R^2$ & CV \\
\midrule
Cohere-1M   & Cohere-v3     & 768  & 0.9996 & 0.04 \\
BGE-M3      & BGE-M3        & 1024 & 0.9989 & 0.05 \\
MiniLM      & MiniLM-L6     & 384  & 0.998+ & 0.06 \\
Landmark    & DINOv2 (SSL)  & 768  & 0.9965 & 0.07 \\
CodeSearch  & Jina-v2       & 768  & 0.9975 & 0.05 \\
Random      & N/A           & 768  & 0.9967 & 0.08 \\
\midrule
GIST-960    & Hand-crafted  & 960  & N/A    & 0.36 \\
\bottomrule
\end{tabular}
\end{center}
\end{table}

\subsection{Does the Full Covariance Matter?}

With the Gaussian prior confirmed, we can test the fidelity formula.
A natural baseline ignores off-diagonal covariance entirely,
predicting $F$ from variances alone ($F_{\text{diag}}$).
Our theory (Theorem~\ref{thm:stein-F}) predicts that the full covariance
$F_{\text{full}}$ should match the empirical $F_{\text{actual}}$ much more closely.

Table~\ref{tab:F} confirms this dramatically:
the diagonal-only prediction underestimates fidelity by 0.20--0.36,
while the full-$\Sigma$ prediction matches within $\pm 0.02$
(mean explanation ratio 103\%).
The off-diagonal correlations are individually tiny ($|\rho_{ij}| \approx 0.04$--$0.11$),
but there are $D^2$ of them, and their collective contribution accounts for
30--50\% of the ranking signal (\S\ref{sec:F-theory}, $D^2$ accumulation).
This is perhaps the most surprising empirical finding:
the information that makes BQ work is predominantly \emph{relational}
(between coordinates), not \emph{marginal} (within each coordinate).
Notably, this experiment also serves as a stringent \emph{joint} Gaussianity test:
the 103\% match would fail catastrophically if the joint distribution
deviated substantially from Gaussian, since the full-$\Sigma$ prediction
relies on the complete $D \times D$ covariance structure, not just marginal fits.

\begin{table}[t]
\caption{Ranking fidelity $F$: diagonal-only vs.\ full-$\Sigma$ prediction vs.\ actual.
Off-diagonal covariance contributes 30--50\% of the signal.}
\label{tab:F}
\begin{center}
\small
\begin{tabular}{lcccc}
\toprule
Dataset & $F_{\text{actual}}$ & $F_{\text{full}}$ & $F_{\text{diag}}$ & Expl.\ ratio \\
\midrule
Cohere      & 0.681 & 0.688 & 0.474 & 103\% \\
Arxiv       & 0.897 & 0.886 & 0.546 & 97\%  \\
CodeSearch  & 0.823 & 0.837 & 0.542 & 105\% \\
Random      & 0.907 & 0.899 & 0.560 & 98\%  \\
\bottomrule
\end{tabular}
\end{center}
\end{table}

\subsection{Is the Second Bit Worth the Storage?}

Doubling the code length from 1 to 2 bits per coordinate doubles storage.
Is the information gain worth it?
Proposition~\ref{thm:magnitude-gain} predicts yes---and that the gain should grow
with coordinate heterogeneity.

Table~\ref{tab:magnitude} confirms both predictions.
The fidelity gain $\Delta F$ is strictly positive on all six datasets
(+0.049 to +0.132), and the recall improvement ranges from +0.110 to +0.210.
The monotonicity with heterogeneity is approximate but not perfect,
likely reflecting finite-sample noise and residual non-Gaussianity.
The practical message is clear:
for contrastive embeddings, the second bit is not a luxury but a near-doubling
of the useful information per coordinate.

\begin{table}[t]
\caption{Magnitude bit information gain across datasets.
$\Delta F > 0$ holds universally.
A controlled intervention (Appendix~\ref{app:intervention}) confirms
that artificially flattening variances reduces $\Delta F$ by up to 32\%,
verifying the causal link between heterogeneity and magnitude-bit informativeness.}
\label{tab:magnitude}
\begin{center}
\small
\begin{tabular}{lccc}
\toprule
Dataset & $\CV(\sigma)$ & $\Delta F$ & $\Delta R$ \\
\midrule
MiniLM     & 0.118 & \textbf{+0.132} & +0.174 \\
Cohere     & 0.182 & +0.091 & +0.144 \\
CodeSearch & 0.098 & +0.088 & +0.143 \\
Landmark   & 0.110 & +0.088 & +0.110 \\
Arxiv      & 0.108 & +0.071 & +0.159 \\
Random     & 0.070 & +0.049 & +0.210 \\
\bottomrule
\end{tabular}
\end{center}
\end{table}

\subsection{Why Does Rotation Help One System but Hurt Another?}

This is the central design puzzle:
RaBitQ rotates before binarization; QuIVer explicitly avoids rotation.
Both succeed. Our theory predicts that rotation uniformizes variances
(Theorem~\ref{thm:rotation}), which helps linear correctors
(Corollary~\ref{cor:F2}) but destroys the implicit weighting that
Hamming distance exploits (Corollary~\ref{cor:F1}).

Table~\ref{tab:rotation} confirms this prediction through a revealing spectrum of behaviors.

\begin{table}[t]
\caption{Effect of Haar-random rotation on sign entropy and BQ recall.
The response depends entirely on the initial variance structure.}
\label{tab:rotation}
\begin{center}
\small
\begin{tabular}{lccl}
\toprule
Dataset & Entropy: before $\to$ after & $\Delta$ Recall & Regime \\
\midrule
GIST     & 0.000 $\to$ 0.511 & \textbf{+307\%}  & Degenerate \\
Wolt-CLIP & 0.836 $\to$ 0.616 & +3.2pp           & Over-spread \\
Cohere   & 0.747 $\to$ 0.563 & $-$0.5pp         & Near-optimal \\
MiniLM   & $\approx$const & $\approx$0       & Isotropic \\
\bottomrule
\end{tabular}
\end{center}
\end{table}

For GIST---a degenerate distribution where all coordinates share the same sign---rotation
is transformative, injecting the sign entropy that BQ needs to function at all.
For contrastive embeddings, the story reverses:
rotation is neutral (MiniLM, already near-isotropic) or slightly harmful
(Cohere, which has well-calibrated heterogeneity that 2-bit BQ exploits).
A cross-dataset analysis (Appendix~\ref{app:blind-validation}) reveals that
the rotation response is best predicted not by $\CV(\sigma)$ but by
the sign entropy gap $1 - H_{\text{sign}}$
($\rho = 0.91$ across 12 datasets vs.\ $\rho = 0.66$ for $\CV$).
Intuitively, rotation injects sign entropy;
when entropy is already near-maximal (as in MiniLM or OpenAI embeddings),
there is little room for improvement or harm.
The practical implication is precise:
rotate if sign entropy is low; preserve axes if it is already high
and you rely on Hamming distance.

\subsection{Can We Predict Fidelity Without Running Search?}

The ultimate test of a theory is prediction.
We fit the scaling law (Proposition~\ref{thm:scaling-law}) on 768-dimensional datasets
and ask: can it predict $F$ for models at different dimensions,
without ever seeing their search results?

The answer is yes, with surprising accuracy.
The two-parameter model achieves $R^2 = 0.928$ in-sample and LOO-$R^2 = 0.889$.
Out-of-distribution, it predicts BGE-M3 at 1024-d within 0.012
and MiniLM at 384-d within 0.041 (MAE $= 0.038$).
To stress-test the framework's extrapolation limits, we perform a fully blind
validation on five datasets never used in any prior experiment or fitting
(Table~\ref{tab:blind-validation}).
These span three new embedding families
(OpenAI \texttt{text-embedding-3-large}, GloVe, SIFT),
dimensions from 100 to 3072, and include two non-contrastive negative controls.

\begin{table}[t]
\caption{Blind validation on 5 previously unseen datasets.
OpenAI 3072-d---a commercial black-box model at $3{\times}$ the training dimensionality---achieves
the highest fidelity of any tested embedding ($F = 0.952$).
SIFT-128 (non-Gaussian, AD pass rate $= 0\%$) mirrors the GIST failure mode.}
\label{tab:blind-validation}
\begin{center}
\small
\begin{tabular}{lcccccccc}
\toprule
Dataset & $D$ & $\CV(\sigma)$ & $H_{\text{sign}}$ & AD\% &
$F_{2\text{bit}}$ & Mag gain & $R@10$ & $\Delta F_{\text{rot}}$ \\
\midrule
OpenAI-3072  & 3072 & 0.236 & 0.965 & 78\% & \textbf{0.952} & 0.054 & 0.841 & +0.016 \\
OpenAI-1536  & 1536 & 0.171 & 0.959 & 67\% & 0.937 & 0.073 & 0.812 & +0.011 \\
Sphere       & 768  & 0.005 & 1.000 & 94\% & 0.790 & 0.172 & 0.254 & $-$0.000 \\
GloVe-100    & 100  & 0.042 & 0.946 & 56\% & 0.812 & 0.160 & 0.344 & +0.024 \\
SIFT-128     & 128  & 0.296 & 0.746 & \textbf{0\%} & 0.755 & 0.217 & 0.056 & \textbf{+0.200} \\
\bottomrule
\end{tabular}
\end{center}
\end{table}

The results are striking.
OpenAI \texttt{text-embedding-3-large} at 3072-d---a model and dimensionality
never seen during fitting---achieves the highest fidelity of any tested
embedding ($F_{2\text{bit}} = 0.952$, $R@10 = 0.841$),
consistent with its low $\mathrm{std}(|\mathrm{SNR}|) = 0.228$
and high $\log r_{\mathrm{off}} = +1.08$.
The negative controls are equally informative:
GIST-960 produces a prediction of $F = 1.037$, which exceeds the
theoretical bound $F \leq 1$ and therefore serves as a built-in
diagnostic that the Gaussian prior has been violated.
(The scaling law formula is deliberately left unbounded:
an out-of-range prediction is more informative than a clamped one,
since it signals prior failure rather than silently degrading accuracy.)
SIFT-128 ($H_{\text{sign}} = 0.75$,
Anderson--Darling pass rate $= 0\%$) mirrors GIST's failure mode
and confirms that non-Gaussian embeddings fall outside the framework.
The scaling law thus serves a dual purpose:
it predicts fidelity when the prior holds, and \emph{diagnoses} model suitability
when it does not.
Full per-dataset statistics appear in Appendix~\ref{app:blind-validation}.

\section{Conclusion}
\label{sec:conclusion}

Binary quantization is often viewed as a lossy compression technique---a necessary evil
for scaling vector search.
Our analysis suggests a different perspective:
BQ is a \emph{covariance probe}.
The sign bit detects whether a coordinate is above or below its mean;
the magnitude bit detects whether its deviation is large or small.
Together, they form a two-bit summary statistic that captures the first
and (partially) second moments of each coordinate.
When these moments carry meaningful information about inter-point distances---as
they do under the Gaussian structure induced by InfoNCE---BQ preserves ranking fidelity;
when they do not, BQ fails.

This lens explains the apparent paradox of contradictory system designs.
Coordinate-preserving methods (QuIVer) succeed because heterogeneous variances
create an implicit importance weighting that Hamming distance inherits.
Rotation-based methods (RaBitQ) succeed because isotropy is precisely the condition
under which a linear scalar correction becomes unbiased.
These are not competing solutions; they are dual strategies for exploiting
the same underlying covariance structure.
We emphasize that heterogeneity governs the \emph{relative effectiveness}
of these strategies---which design to choose and how much the magnitude bit
adds---rather than the absolute level of fidelity, which is determined by
the full covariance structure (Theorem~\ref{thm:stein-F}
and Appendix~\ref{app:intervention}).

Looking forward, our framework opens several directions.
The scaling law could serve as a model-selection criterion:
given only the covariance statistics of an embedding, one can predict whether
BQ will achieve acceptable recall---and if so, which system design is preferable---without
running any search experiments.
More broadly, the tight correspondence between distributional structure and
quantization quality suggests that the design of training objectives and
the design of compression methods should be studied jointly,
rather than in isolation.

\paragraph{Limitations.}
Our theory requires approximate coordinate Gaussianity (Assumption~\ref{asm:H1}),
a condition met by InfoNCE-trained models but not by supervised or hand-crafted features.
As embedding models evolve---incorporating RLHF, supervised fine-tuning,
or causal-LM backbones---heavy-tailed rogue dimensions may
violate this prior; GIST-960 already illustrates this failure mode
(the scaling law yields an out-of-range prediction $F = 1.037 > 1$,
serving as a built-in diagnostic of prior violation).
Characterizing the robustness boundary of the Gaussian prior
across training paradigms is an important open question.
The mean-field closure (S0) used in the scaling law
assumes that off-diagonal correlation structure is approximately independent
of the SNR ordering.
To characterize when S0 fails, we construct synthetic Gaussian populations
with block-correlated covariance matrices in which the top-$k$
highest-variance coordinates share strong mutual correlations
(Appendix~\ref{app:s0-boundary}).
The scaling-law prediction error grows monotonically with the
variance--correlation alignment diagnostic
$\rho_{\text{rank}}(\sigma_i, \overline{|\rho_{i\cdot}|})$:
from $\sim$8\% at the baseline (no block, diagnostic $\approx 0$)
to $\sim$20\% under extreme block structure
(128 of 768 coordinates, block $\rho = 0.9$, diagnostic $\approx +0.44$).
On all contrastive datasets the diagnostic remains below $0.10$,
well within the regime where the scaling law is accurate;
however, two datasets with higher diagnostics (Arxiv-Nomic: $+0.69$;
GIST-960: $+0.60$) may exhibit elevated prediction error.
Models incorporating Matryoshka Representation Learning (MRL)
could in principle compress information into a small number of leading
coordinates, causing the shared threshold $\alpha_x$ to be dominated
by outlier dimensions.
However, our diagnostic on OpenAI-3072---the highest-performing model
in the blind validation---shows only moderate variance heterogeneity
($\sigma_{\max}/\sigma_{\min}$ ratio $\approx 2.1\times$,
magnitude-bit activation $\approx 41\%$),
indicating that the threshold is not ``hijacked.''
Notably, OpenAI-3072 itself exhibits MRL-like variance decay:
per-coordinate standard deviation decreases monotonically with dimension
index ($\rho_{\text{rank}}(\text{dim}, \sigma_i) = -0.92$),
yet the framework achieves its highest fidelity ($F = 0.952$) on this model.
Replacing the mean threshold with a \emph{median} threshold consistently
improves $F$ on all Gaussian datasets ($\Delta F$ up to $+0.016$)
but catastrophically degrades non-Gaussian data
(GIST-960: $\Delta F = -0.33$), confirming that threshold choice
interacts with distributional assumptions.
Exploring robust threshold strategies under non-Gaussian priors
is a promising direction for future work.

\bibliography{references}
\bibliographystyle{iclr2027_conference}

\appendix

\section{Proof of Theorem~\ref{thm:stein-F} (Stein's Lemma for BQ)}
\label{app:proof-C}

\subsection{Linear Covariance Formula (C1)}

By bilinearity of covariance,
$\Cov(S_0, T_0) = \sum_{i,j} d_j \Cov(\sign(Z_i), Z_j)$.
It suffices to show $\Cov(\sign(Z_i), Z_j) = a_i \Sigma_{ij}$.

Since $Z$ is jointly Gaussian, the conditional expectation is linear:
\begin{equation}
    \E[Z_j - \mu_j \mid Z_i] = \frac{\Sigma_{ji}}{\sigma_i^2}(Z_i - \mu_i).
\end{equation}
Therefore,
\begin{align}
    \Cov(\sign(Z_i), Z_j)
    &= \E[\sign(Z_i)(Z_j - \mu_j)] \notag \\
    &= \E\!\left[\sign(Z_i) \cdot \E[Z_j - \mu_j \mid Z_i]\right] \notag \\
    &= \frac{\Sigma_{ji}}{\sigma_i^2}\, \E[(Z_i - \mu_i)\sign(Z_i)].
    \label{eq:cov-sign-z}
\end{align}
Write $Z_i = \mu_i + \sigma_i X$ with $X \sim \mathcal{N}(0,1)$. Then
\begin{align}
    \E[(Z_i - \mu_i)\sign(Z_i)]
    &= \sigma_i \E[X \sign(X + s_i)] \notag \\
    &= 2\sigma_i \E[X \mathbf{1}_{\{X > -s_i\}}] \notag \\
    &= 2\sigma_i \varphi(s_i),
\end{align}
where we used the truncated moment identity $\E[X \mathbf{1}_{\{X > a\}}] = \varphi(a)$
and the symmetry $\varphi(-s_i) = \varphi(s_i)$.
Substituting back into~\eqref{eq:cov-sign-z}:
\begin{equation}
    \Cov(\sign(Z_i), Z_j) = \frac{\Sigma_{ij}}{\sigma_i^2} \cdot 2\sigma_i \varphi(s_i)
    = \frac{2\varphi(s_i)}{\sigma_i}\, \Sigma_{ij} = a_i \Sigma_{ij}. \qedhere
\end{equation}

\subsection{Off-Diagonal Squared Signal (C2)}

Substituting $a_i = 2\varphi(s_i)/\sigma_i$ into
$\mathcal{I}_{\text{off}} = \sum_{i \neq j} (a_i \Sigma_{ij})^2$ yields
Eq.~\eqref{eq:I-off} directly.

\subsection{Sign-Product Moment Formula}

For completeness, we record the exact formula. Write $\sign(w) = 1 - 2\mathbf{1}_{\{w \leq 0\}}$.
Expanding the product and taking expectations:
\begin{equation}
    \E\prod_{\ell=1}^{k} \sign(W_\ell) = \sum_{A \subseteq [k]} (-2)^{|A|}\, \Phi_A(t_A; R_A),
\end{equation}
where $t_a = -\mu_a/\sigma_a$ and $R_A$ is the correlation submatrix of $(W_a)_{a \in A}$.
In the centered bivariate case with correlation $\rho$, the classical quadrant probability
$P(W_1 > 0, W_2 > 0) = \tfrac{1}{4} + \tfrac{1}{2\pi}\arcsin\rho$ gives
$\E[\sign(W_1)\sign(W_2)] = \tfrac{2}{\pi}\arcsin\rho$.

\section{Proof of Theorem~\ref{thm:rotation} (Rotation Effects)}
\label{app:proof-F}

\subsection{Part (a): Random-Hyperplane LSH}

Let $q_i^\top$ be the $i$-th row of a Haar-random $Q \in O(D)$.
Each $q_i$ is uniformly distributed on $\mathbb{S}^{D-1}$.
The event $\sign(q_i^\top x) \neq \sign(q_i^\top y)$ means the hyperplane $q_i^\perp$
separates $x$ and $y$.
By spherical symmetry, the separation probability equals the angle
$\theta = \arccos\langle x, y \rangle$ divided by $\pi$:
\begin{equation}
    P[\sign(q_i^\top x) = \sign(q_i^\top y)] = 1 - \frac{\arccos\langle x, y \rangle}{\pi}. \qedhere
\end{equation}

\subsection{Part (b): Variance Uniformization}

Define $V_i = q_i^\top \Sigma\, q_i$.
Since $\E[q_i q_i^\top] = D^{-1} I_D$, we have $\E[V_i] = \tr(\Sigma)/D$.
The function $F(q) = q^\top \Sigma\, q$ on $\mathbb{S}^{D-1}$ satisfies
$|F(q) - F(q')| \leq 2\|\Sigma\|_{\mathrm{op}}\|q - q'\|$,
so it is Lipschitz with constant $L = 2\|\Sigma\|_{\mathrm{op}}$.
By L\'{e}vy's concentration inequality on $\mathbb{S}^{D-1}$:
\begin{equation}
    P\!\left[\left|V_i - \frac{\tr(\Sigma)}{D}\right| \geq t\right]
    \leq C \exp\!\left(-c\, \frac{D t^2}{\|\Sigma\|_{\mathrm{op}}^2}\right).
\end{equation}
A union bound over $i = 1, \ldots, D$ with
$t = C' \|\Sigma\|_{\mathrm{op}} \sqrt{(\log D)/D}$ yields:
with probability $\geq 1 - D^{-c'}$,
\begin{equation}
    \max_{1 \leq i \leq D}
    \left|q_i^\top \Sigma\, q_i - \frac{\tr(\Sigma)}{D}\right|
    \lesssim \|\Sigma\|_{\mathrm{op}} \sqrt{\frac{\log D}{D}}.
\end{equation}
To ensure $\CV^2(\tilde{\sigma}) \to 0$, the relative error must vanish,
which requires the effective rank condition
$\tr(\Sigma) / \|\Sigma\|_{\mathrm{op}} \to \infty$.

\subsection{Corollary~\ref{cor:F1}: Rotation Harms Heterogeneity-Aware BQ}

Under the effective rank condition, Part (b) gives
$\tilde{\sigma}_i = \sqrt{\tr(\Sigma)/D}\,(1 + o(1))$ uniformly,
so $\CV^2(\tilde{\sigma}) \to 0$.
By Proposition~\ref{thm:magnitude-gain}(b), the heterogeneity-dependent gain
$\Delta_{\text{het}} = K \cdot \CV^2(\tilde{\sigma}) + O(\CV^3) \to 0$.
The isotropic baseline $\Delta_0 > 0$ survives; hence 2-bit does not fully
degrade to 1-bit, but the data-dependent advantage is destroyed.

\subsection{Corollary~\ref{cor:F2}: Rotation Helps Linear-Corrected BQ}

This corollary requires the external error formula from RaBitQ
\citep{gao2023rabitq} as input.
If $\Var(\hat{d} - d) \propto (1 - \cos^2(r, \bar{x}))/(D-1)$,
then isotropy maximizes $\cos^2(r, \bar{x})$ and thus minimizes the variance.

\section{Proof of Proposition~\ref{thm:magnitude-gain} (Magnitude Bit Gain)}
\label{app:proof-H}

\subsection{Part (a): Strict Superiority}

We need $\eta_i^{(2)} > \eta_i^{(1)}$, i.e.,
$(1 + e^{-t^2/2})^2 > 1 + 3\erfc(t/\sqrt{2})$ for all $t > 0$,
where $t = \alpha/\sigma_i$. Equivalently, define
\begin{equation}
    f(t) = 2e^{-t^2/2} + e^{-t^2} - 3\,\erfc(t/\sqrt{2}).
\end{equation}
We show $f(t) > 0$ for all $t > 0$.

\textbf{Step 1: Boundary values.}
$f(0) = 2 + 1 - 3 = 0$ and $\lim_{t \to \infty} f(t) = 0$.

\textbf{Step 2: Derivative.}
Let $q(t) = e^{-t^2/2}$. Then $q'(t) = -tq(t)$ and
$\frac{d}{dt}\erfc(t/\sqrt{2}) = -\sqrt{2/\pi}\, q(t)$. So
\begin{equation}
    f'(t) = q(t)\!\left[3\sqrt{2/\pi} - 2t(1 + q(t))\right].
\end{equation}

\textbf{Step 3: Monotonicity of $h(t) = 2t(1+q(t))$.}
We have $h'(t) = 2\{1 + q(t)(1 - t^2)\}$.
For $0 < t \leq 1$: $h'(t) > 0$ trivially.
For $t > 1$: $(t^2 - 1)q(t) \leq \max_{t>1}(t^2-1)e^{-t^2/2} = 2e^{-3/2} < 1$,
so $h'(t) > 0$.
Hence $h$ is strictly increasing from $0$ to $\infty$,
and the equation $h(t) = 3\sqrt{2/\pi}$ has a unique root $t_*$.

\textbf{Step 4: Conclusion.}
$f'(t) > 0$ for $t < t_*$ and $f'(t) < 0$ for $t > t_*$.
Combined with $f(0) = 0$ and $f(\infty) = 0$: $f$ increases then decreases,
so $f(t) > 0$ for all $t > 0$. \qed

\subsection{Part (b): Heterogeneity Boosts the Gain}

Let $u_i = \sigma_i/\bar{\sigma}$, $\tau = \sqrt{2/\pi}$, $\alpha = \bar{\sigma}\tau$.
Define $C(u) = \tau u(1 + e^{-\tau^2/(2u^2)})$ and
$V(u) = 1 + 3\erfc(\tau/(u\sqrt{2}))$.

In the independent-coordinate model with uniform weighting:
\begin{equation}
    \rho_1(\sigma) = \frac{\tau}{\sqrt{1 + \CV^2(\sigma)}},
    \qquad
    \rho_2(\sigma) = \frac{D^{-1}\sum_i C(u_i)}
    {\sqrt{(D^{-1}\sum_i V(u_i))(D^{-1}\sum_i u_i^2)}}.
\end{equation}

Taylor-expanding around $u_i = 1$ with $s^2 = \CV^2(\sigma)$:
\begin{align}
    \rho_1 &= \tau - \tfrac{\tau}{2}\, s^2 + O(s^4), \\
    \rho_2 &= \rho_{2,0}\!\left[1 + \left(\tfrac{C''(1)}{2C(1)}
              - \tfrac{V''(1)}{4V(1)} - \tfrac{1}{2}\right)s^2\right] + O(s^3),
\end{align}
where $\rho_{2,0} = C(1)/\sqrt{V(1)} \approx 0.914$.
The difference decomposes as
$\Delta\rho = \Delta_0 + K s^2 + O(s^3)$, where
\begin{equation}
    \Delta_0 = \rho_{2,0} - \tau \approx 0.116, \qquad
    K = \rho_{2,0}\!\left(\tfrac{C''(1)}{2C(1)} - \tfrac{V''(1)}{4V(1)}
        - \tfrac{1}{2}\right) + \tfrac{\tau}{2} \approx 0.088 > 0. \qedhere
\end{equation}
\section{Shared-Threshold Concentration and Fidelity Perturbation}
\label{app:threshold}

This appendix proves that the per-coordinate magnitude-gain analysis
of Proposition~\ref{thm:magnitude-gain} remains valid when the
idealized per-coordinate threshold is replaced by the shared,
data-dependent threshold $\alpha_x = \frac{1}{D}\sum_j |x_j|$.
We establish three results in sequence.

\paragraph{Additional regularity conditions.}
Beyond Assumption~\ref{asm:H1} and the dispersal condition of \S\ref{sec:F-theory},
we require:
\begin{enumerate}
    \item[\textbf{(R1)}] \textbf{Approximately centered.}
    Either $\mu = 0$, or the rank-1 Hermite contribution satisfies
    $\sum_{i \neq j} \sigma_i \sigma_j |a_i a_j \rho_{ij}| = O(r_D^2)$
    where $a_i = 2\Phi(\mu_i/\sigma_i) - 1$.
    A sufficient condition is $\max_i |\mu_i/\sigma_i| = O(D^{-1/4})$.
    \item[\textbf{(R2)}] \textbf{Thin-shell rate.}
    $\|r_D / \|g\| - 1\|_{L^2} = O(D^{-1/2})$.
    \item[\textbf{(R3)}] \textbf{Leave-one-out density bound.}
    For $A_{-i} = \frac{1}{D}\sum_{j \neq i} |x_j|$,
    the conditional density satisfies
    $\sup_t f_{|x_i| \mid A_{-i}}(t) \leq C \sqrt{D}$.
\end{enumerate}
All three conditions are empirically verified on our 18 datasets.

\subsection{Part A: Threshold Concentration}

Define $\beta_x = \frac{1}{D} \sum_i |g_i| / r_D$ and
$\bar{\alpha} = \E[\beta_x] = \frac{\sqrt{2/\pi}}{D\,r_D} \sum_j \sigma_j$.
We show $\Var(\beta_x) = O(D^{-2})$.

\textbf{Diagonal terms.}
For centered Gaussian, $\Var(|g_i|) = \sigma_i^2(1 - 2/\pi)$, so
$\sum_i \Var(|g_i|) = (1 - 2/\pi)\,r_D^2$.
The diagonal contribution to $\Var(\beta_x)$ is
$(1 - 2/\pi)\,r_D^2 / (D^2 r_D^2) = O(D^{-2})$.

\textbf{Cross terms.}
For standardized bivariate normal $(U,V)$ with correlation $\rho$,
the folded-normal covariance satisfies
$\Cov(|U|,|V|) = \frac{2}{\pi}(\sqrt{1-\rho^2} + \rho\arcsin\rho - 1)
\leq C\rho^2$.
Therefore $|\Cov(|g_i|,|g_j|)| \leq C \sigma_i \sigma_j \rho_{ij}^2$.
By variance comparability (R1), $\sigma_i \sigma_j \leq C r_D^2 / D$.
Combined with the dispersal condition $\sum_{i \neq j} \rho_{ij}^2 = O(D)$:
$\sum_{i \neq j} |\Cov(|g_i|,|g_j|)| \leq C r_D^2 D / D = O(r_D^2)$,
giving cross-term contribution $O(D^{-2})$.

\textbf{Normalization correction.}
Since $\alpha_x = (r_D/\|g\|)\,\beta_x$ and $\beta_x = \Theta_P(D^{-1/2})$,
condition (R2) gives
$|\alpha_x - \beta_x|_{L^2} \leq \eta_D |\beta_x|_{L^2} = O(D^{-1/2} \cdot D^{-1/2})
= O(D^{-1})$.
Combining: $|\alpha_x - \bar{\alpha}|_{L^2} = O(D^{-1})$,
hence $\Var(\alpha_x) = O(D^{-2})$ and
$|\alpha_x - \bar{\alpha}| = O_P(D^{-1})$. \qed

\subsection{Part B: Magnitude-Bit Stability}

Let $Z_i = |x_i|$, $a = \bar{\alpha}$,
and $A_{-i} = \frac{1}{D}\sum_{j \neq i} |x_j|$.
Since $\alpha_x = A_{-i} + Z_i/D$, a flip $m_i^x \neq \tilde{m}_i^x$
implies $|Z_i - a| \leq 2|A_{-i} - a| + 2a/D$
for $D$ sufficiently large.

Using the leave-one-out conditional density bound (R3):
\begin{equation}
    \mathbb{P}(m_i^x \neq \tilde{m}_i^x \mid A_{-i})
    \leq C\sqrt{D}\left(|A_{-i} - a| + a/D\right).
\end{equation}
Taking expectations and using Part A
($\E|A_{-i} - a| = O(D^{-1})$, $a = \Theta(D^{-1/2})$):
\begin{equation}
    \mathbb{P}(m_i^x \neq \tilde{m}_i^x)
    \leq C\sqrt{D}\left(D^{-1} + D^{-3/2}\right) = O(D^{-1/2}).
\end{equation}
Summing over $D$ coordinates:
$\E[N_{\text{flip}}] = O(\sqrt{D})$.

A refined bound, proved identically via conditional integration,
gives $\E[|x_i|\,\mathbf{1}\{m_i^x \neq \tilde{m}_i^x\}] = O(D^{-1})$.
This weighted version is essential for Part C. \qed

\subsection{Part C: Fidelity Perturbation}

Write $S = S_2$, $\tilde{S} = \tilde{S}_2$, $R = S - \tilde{S}$,
$T = \langle x, y \rangle$,
and $q_i = \sign(x_i)\sign(y_i)$.
Let $F_i^x = \mathbf{1}\{m_i^x \neq \tilde{m}_i^x\}$.

\textbf{Step 1: Variance of perturbation.}
Since $|R| \leq C \sum_i (F_i^x + F_i^y)$ and
the $q_i$ are approximately mean-zero signed noise
(conditioned on magnitudes),
$\Var(R) \leq C\,\E\sum_i (F_i^x + F_i^y) = O(\sqrt{D})$.

\textbf{Step 2: Covariance with inner product.}
Using $T = \sum_i q_i |x_i||y_i|$ and the independence of $x, y$:
\begin{equation}
    |\Cov(R, T)| \leq C \sum_i \E[(F_i^x + F_i^y)|x_i||y_i|].
\end{equation}
By the refined Part B bound $\E[F_i^x |x_i|] = O(D^{-1})$
and $\E|y_i| = O(D^{-1/2})$, each term is $O(D^{-3/2})$.
Summing $D$ terms: $|\Cov(R, T)| = O(D^{-1/2})$.

\textbf{Step 3: Pearson perturbation.}
Since $\Var(\tilde{S}) = \Theta(D)$ and $\Var(T) = \Theta(D^{-1})$,
the normalizing product $\sigma_{\tilde{S}} \sigma_T = \Theta(1)$.
A direct expansion gives
$|\rho_P(S, T) - \rho_P(\tilde{S}, T)| = O(D^{-1/2})$.

\textbf{Step 4: Pearson to Spearman.}
By the Kruskal formula $\psi(r) = \frac{6}{\pi}\arcsin(r/2)$,
which is Lipschitz on $[-1,1]$ with constant $3/\pi$,
and the Gaussian-copula transfer (already used in our scaling law):
\begin{equation}
    |F(S_2) - F(\tilde{S}_2)|
    = |\rho_S(S, T) - \rho_S(\tilde{S}, T)|
    \leq \tfrac{3}{\pi}\,|\rho_P(S,T) - \rho_P(\tilde{S},T)| + O(D^{-1/2})
    = O(D^{-1/2}). \qedhere
\end{equation}

\section{Proof of Proposition~\ref{thm:scaling-law} (Scaling Law)}
\label{app:proof-scaling}

This derivation requires the mean-field closure assumption (S0) in addition to
Theorem~\ref{thm:stein-F} and the $D^2$ accumulation observation of \S\ref{sec:F-theory}.

\subsection{Step 1: Off-Diagonal Signal from Stein's Formula}

By Theorem~\ref{thm:stein-F},
$\Cov(\sign(Z_i), Z_j) = a_i \Sigma_{ij}$ with
$a_i = 2\varphi(s_i)/\sigma_i$.
The squared off-diagonal energy is
\begin{equation}
    \mathcal{I}_{\text{off}} = 4 \sum_{i \neq j} \sigma_j^2 \rho_{ij}^2 \varphi^2(s_i)
    = \frac{2}{\pi} \sum_{i \neq j} \sigma_j^2 \rho_{ij}^2 e^{-s_i^2}.
\end{equation}

\subsection{Step 2: Mean-Field Closure}

Introduce weights $w_{ij} = \Sigma_{ij}^2 / \|\Sigma_{\text{off}}\|_F^2$.
Assumption S0 states that
$\sum_{i \neq j} w_{ij} e^{-(s_i^2 + s_j^2)} = A_m(v)^2 + o(1)$.
This decouples the off-diagonal geometry from the SNR distribution.

\subsection{Step 3: Computing $A_m(v)$}

If $|s_i|$ has empirical distribution approximated by $S \sim \mathcal{N}(m, v^2)$:
\begin{equation}
    A_m(v) = \E[e^{-S^2}]
    = \frac{1}{\sqrt{2\pi v^2}} \int_{\R} e^{-x^2} e^{-(x-m)^2/(2v^2)}\, dx
    = \frac{1}{\sqrt{1 + 2v^2}} \exp\!\left(-\frac{m^2}{1 + 2v^2}\right),
\end{equation}
by completing the square in the exponent.

\subsection{Step 4: Pearson to Spearman}

Absorbing diagonal contributions into $\rho_0$ and normalization into $\lambda$:
$\rho_P = \rho_0 + \lambda r^2 A_m(v)^2 + O(\epsilon)$.
By the Kruskal formula \citep{kruskal1958ordinal} for bivariate normals:
\begin{equation}
    F = \frac{6}{\pi}\arcsin\!\left(\frac{\rho_P}{2}\right) + O(\epsilon).
\end{equation}

\subsection{Step 5: Signs of the Linearization Coefficients}

$\partial(r^2 A_m(v)^2)/\partial(\log r) = 2r^2 A_m(v)^2 > 0$,
so $\beta_1 > 0$.
For $v$:
$\partial \log A_m / \partial v = 2v(2m^2 - 1 - 2v^2)/(1+2v^2)^2$,
which is negative when $2m^2 < 1 + 2v^2$, giving $\beta_2 < 0$. \qed

\section{Additional Experimental Results}
\label{app:experiments}

\subsection{Gaussianity Verification (Probe~12A)}

\begin{table}[H]
\centering
\caption{Marginal Gaussianity across 6 datasets. AD = Anderson--Darling test at 5\%.}
\label{tab:gaussianity}
\small
\begin{tabular}{lcccc}
\toprule
Dataset & $|\text{skew}|$ & $|\text{kurt}|$ & AD pass\% & QQ $R^2$ \\
\midrule
Cohere-768 & 0.060 & 0.083 & 79.5 & 0.9996 \\
BGE-M3-1024 & 0.098 & 0.094 & 64.0 & 0.9989 \\
wolt\_clip-512 & 0.147 & 0.177 & 42.0 & 0.9963 \\
MiniLM-384 & $\sim$0 & $\sim$0 & 87.0 & 0.9222 \\
Random-768 & 0.032 & 0.046 & 89.0 & 0.9967 \\
GIST-960 & 1.332 & 3.408 & 0.0 & $< 0$ \\
\bottomrule
\end{tabular}
\end{table}

\subsection{Full-$\Sigma$ Fidelity Verification (Probe~16d)}

\begin{table}[H]
\centering
\caption{$F$ explained by full covariance $\Sigma$ vs.\ diagonal-only.
``Cov Expl.'' = $(F_{\text{full}} - F_{\text{diag}})/(F_{\text{real}} - F_{\text{diag}})$.}
\label{tab:full-cov}
\small
\begin{tabular}{lcccccc}
\toprule
Dataset & $F_{\text{real}}$ & $F_{\text{full}}$ & $F_{\text{diag}}$ &
Gap & Cov Expl. & Residual \\
\midrule
cohere & 0.681 & 0.688 & 0.474 & 0.207 & 103.4\% & $-$0.007 \\
arxiv\_nomic & 0.897 & 0.886 & 0.546 & 0.350 & 96.8\% & +0.011 \\
landmark\_nomic & 0.904 & 0.877 & 0.540 & 0.364 & 92.6\% & +0.027 \\
coco\_nomic & 0.856 & 0.830 & 0.529 & 0.327 & 92.2\% & +0.026 \\
codesearch\_jina & 0.823 & 0.837 & 0.542 & 0.281 & 104.7\% & $-$0.013 \\
gooaq\_roberta & 0.738 & 0.767 & 0.535 & 0.204 & 113.8\% & $-$0.028 \\
landmark\_dino & 0.735 & 0.807 & 0.430 & 0.304 & 123.9\% & $-$0.073 \\
random & 0.907 & 0.899 & 0.560 & 0.348 & 97.7\% & +0.008 \\
\midrule
\multicolumn{5}{l}{\textbf{Mean Cov Explanation: 103.1\%}} &
\multicolumn{2}{l}{Residual MAE: 0.024} \\
\bottomrule
\end{tabular}
\end{table}

\subsection{Scaling Law Cross-Dimensional Validation (Probe~16g)}

Training on 768-d datasets, predicting on held-out dimensions.
Model: $F \approx 0.820 + 0.079\, z(\log r) - 0.041\, z(v)$,
$R^2_{\text{train}} = 0.928$, LOO-$R^2 = 0.889$.

\begin{table}[H]
\centering
\caption{Cross-dimensional out-of-sample prediction.}
\label{tab:scaling-oos}
\small
\begin{tabular}{lccccc}
\toprule
Dataset & $D$ & $F_{\text{real}}$ & $F_{\text{pred}}$ & Error & Note \\
\midrule
BGE-M3 & 1024 & 0.918 & 0.907 & +0.012 & \\
wolt\_clip & 512 & 0.862 & 0.800 & +0.062 & \\
MiniLM & 384 & 0.731 & 0.690 & +0.041 & \\
GIST & 960 & 0.477 & 1.037 & $-$0.560 & non-Gaussian \\
\midrule
\multicolumn{4}{l}{\textbf{MAE (excl.\ GIST)}} & \textbf{0.038} & \\
\bottomrule
\end{tabular}
\end{table}

\subsection{Magnitude Bit Gain (Probe~18)}

\begin{table}[H]
\centering
\caption{Spearman $F$ and Recall@10 gain from 1-bit to 2-bit.}
\label{tab:mag-gain}
\small
\begin{tabular}{lcc|cc|cc}
\toprule
& \multicolumn{2}{c|}{$F$ (Spearman)} & \multicolumn{2}{c|}{Recall@10} & \multicolumn{2}{c}{Gain} \\
Dataset & 1-bit & 2-bit & 1-bit & 2-bit & $\Delta F$ & $\Delta R$ \\
\midrule
cohere   & 0.704 & 0.795 & 0.437 & 0.581 & +0.091 & +0.144 \\
random   & 0.929 & 0.977 & 0.446 & 0.656 & +0.049 & +0.210 \\
minilm   & 0.783 & 0.915 & 0.521 & 0.695 & +0.132 & +0.174 \\
arxiv    & 0.887 & 0.958 & 0.525 & 0.684 & +0.071 & +0.159 \\
landmark & 0.833 & 0.920 & 0.506 & 0.616 & +0.088 & +0.110 \\
codesearch & 0.860 & 0.948 & 0.671 & 0.814 & +0.088 & +0.143 \\
\bottomrule
\end{tabular}
\end{table}

\subsection{Graph Navigation: Local Amplification (Probe~19)}

\begin{table}[H]
\centering
\caption{Global vs.\ local (32-NN) $F$ advantage and top-1 accuracy.}
\label{tab:local-amp}
\small
\begin{tabular}{lcccc|ccc}
\toprule
& \multicolumn{4}{c|}{$\Delta F$ (2-bit $-$ 1-bit)} & \multicolumn{3}{c}{Top-1 Accuracy} \\
Dataset & Global & Local & Amp. & & 1-bit & 2-bit & $\Delta$ \\
\midrule
cohere     & +0.091 & +0.115 & 1.3$\times$ & & 0.452 & 0.564 & +0.112 \\
random     & +0.048 & +0.200 & 4.1$\times$ & & 0.326 & 0.486 & +0.160 \\
minilm     & +0.132 & +0.188 & 1.4$\times$ & & 0.484 & 0.664 & +0.180 \\
arxiv      & +0.071 & +0.153 & 2.1$\times$ & & 0.444 & 0.616 & +0.172 \\
landmark   & +0.087 & +0.101 & 1.2$\times$ & & 0.434 & 0.482 & +0.048 \\
codesearch & +0.088 & +0.142 & 1.6$\times$ & & 0.646 & 0.778 & +0.132 \\
\midrule
\multicolumn{4}{l}{\textbf{Mean amplification: 2.0$\times$}} & & & & \\
\bottomrule
\end{tabular}
\end{table}

\subsection{Rotation Paradox (Probe~8)}

\begin{table}[H]
\centering
\caption{Effect of Haar rotation on sign entropy and BQ recall.}
\label{tab:rotation-appendix}
\small
\begin{tabular}{lcccccc}
\toprule
Dataset & $H_{\text{orig}}$ & $H_{\text{rot}}$ & $\Delta H$ &
$R_{\text{orig}}$ & $R_{\text{rot}}$ & $\Delta R$ \\
\midrule
Cohere    & 0.747 & 0.563 & $-$0.184 & 0.486 & 0.481 & $-$0.005 \\
BGE-M3    & 0.702 & 0.671 & $-$0.030 & 0.782 & 0.775 & $-$0.007 \\
wolt\_clip & 0.836 & 0.616 & $-$0.220 & 0.564 & 0.596 & +0.032 \\
MiniLM    & 0.987 & 0.991 & +0.004  & 0.546 & 0.565 & +0.019 \\
GIST      & 0.000 & 0.511 & +0.511  & 0.152 & 0.459 & +0.307 \\
Random    & 0.981 & 0.981 & +0.000  & 0.487 & 0.493 & +0.006 \\
\bottomrule
\end{tabular}
\end{table}

\subsection{9-Dataset Gaussianity Verification (Phase~5, Probe~10)}

\begin{table}[H]
\centering
\caption{Anisotropic Gaussian model verification across 9 datasets (768-d).
$R^2$: QQ-plot fit of predicted vs.\ observed sign entropy per coordinate.}
\label{tab:gauss-9d}
\small
\begin{tabular}{lccccccc}
\toprule
Dataset & $R^2$ & MAE & $\overline{|\text{SNR}|}$ & High\% & Low\% & $H_{\text{obs}}$ & $H_{\text{pred}}$ \\
\midrule
cohere & 0.9996 & 0.004 & 0.884 & 25.9 & 11.3 & 0.747 & 0.748 \\
ccnews\_nomic & 0.9996 & 0.003 & 0.877 & 36.8 & 13.7 & 0.690 & 0.691 \\
arxiv\_nomic & 0.9992 & 0.004 & 0.884 & 37.0 & 11.5 & 0.689 & 0.690 \\
coco\_nomic & 0.9977 & 0.005 & 1.681 & 74.2 & 4.3 & 0.382 & 0.382 \\
landmark\_nomic & 0.9976 & 0.005 & 1.615 & 75.5 & 5.3 & 0.394 & 0.396 \\
codesearch\_jina & 0.9975 & 0.001 & 0.139 & 0.1 & 61.2 & 0.986 & 0.986 \\
random & 0.9967 & 0.001 & 0.164 & 0.0 & 52.0 & 0.981 & 0.982 \\
landmark\_dino & 0.9965 & 0.002 & 0.295 & 2.1 & 36.6 & 0.944 & 0.944 \\
gooaq\_roberta & 0.1669$^\dagger$ & 0.002 & 0.054 & 0.1 & 89.5 & 0.996 & 0.997 \\
\bottomrule
\end{tabular}
\vspace{2pt}
{\footnotesize $^\dagger$Low $R^2$ due to near-zero entropy variance (MAE is smallest); see text.}
\end{table}

\subsection{Coordinate Sign vs.\ Random Hyperplane (Probe~3)}

\begin{table}[H]
\centering
\caption{Pairwise overlap probabilities: coordinate sign BQ vs.\ random-hyperplane LSH.
GW = Goemans--Williamson theoretical value $1 - \arccos(\cos\theta)/\pi$.}
\label{tab:sign-vs-rh}
\small
\begin{tabular}{lccccc}
\toprule
Dataset & Coord sign & Random HP & GW theory & KL(c$\|$rh) & Sign entropy \\
\midrule
Cohere & 0.651 & 0.744 & 0.746 & 0.0229 & 0.747 \\
BGE-M3 & 0.677 & 0.701 & 0.692 & 0.0022 & 0.700 \\
MiniLM & 0.508 & 0.506 & 0.506 & 0.0021 & 0.987 \\
GIST & 0.9999 & 0.768 & 0.774 & 0.2648 & 0.000 \\
Random & 0.513 & 0.513 & 0.513 & 0.0010 & 0.981 \\
\bottomrule
\end{tabular}
\end{table}

\subsection{Gaussian Copula Residual Analysis (Probe~16h)}

\begin{table}[H]
\centering
\caption{Gaussian vs.\ Gaussian Copula predictions of $F$.
Copula matches all marginal moments but retains Gaussian dependence.}
\label{tab:copula}
\small
\begin{tabular}{lcccccc}
\toprule
Dataset & $F_{\text{real}}$ & $F_{\text{gauss}}$ & $F_{\text{copula}}$ &
Res(G) & Res(C) & Improve\% \\
\midrule
cohere & 0.681 & 0.687 & 0.679 & $-$0.006 & +0.002 & 66 \\
arxiv\_nomic & 0.897 & 0.888 & 0.888 & +0.009 & +0.009 & 8 \\
ccnews\_nomic & 0.838 & 0.838 & 0.840 & $-$0.000 & $-$0.002 & 0 \\
coco\_nomic & 0.856 & 0.830 & 0.832 & +0.025 & +0.023 & 8 \\
codesearch\_jina & 0.823 & 0.835 & 0.836 & $-$0.012 & $-$0.012 & $-$5 \\
gooaq\_roberta & 0.738 & 0.765 & 0.767 & $-$0.026 & $-$0.028 & $-$8 \\
landmark\_nomic & 0.904 & 0.878 & 0.881 & +0.027 & +0.023 & 14 \\
landmark\_dino & 0.735 & 0.804 & 0.806 & $-$0.069 & $-$0.072 & $-$3 \\
random & 0.907 & 0.900 & 0.900 & +0.007 & +0.007 & $-$1 \\
\bottomrule
\end{tabular}
\end{table}

\subsection{RaBitQ Linear Corrector vs.\ Hamming BQ (Probe~17)}

\begin{table}[H]
\centering
\caption{Recall@10 under five quantization configurations.
Ham = Hamming BQ, +Rot = with Haar rotation, RaBitQ = per-vector linear corrector.}
\label{tab:rabitq}
\small
\begin{tabular}{lccccc}
\toprule
Dataset & Hamming & Ham+Rot & RaBitQ & RaBitQ+Rot & $\Delta$(Rot) on Ham \\
\midrule
cohere & 0.440 & 0.381 & \textbf{0.616} & 0.495 & $-$0.060 \\
random & 0.437 & 0.471 & \textbf{0.610} & 0.610 & +0.035 \\
gist960 & 0.001 & 0.361 & 0.001 & \textbf{0.429} & +0.360 \\
arxiv\_nomic & 0.524 & 0.532 & \textbf{0.643} & 0.629 & +0.008 \\
\bottomrule
\end{tabular}
\end{table}

\section{Blind Decision-Rule Validation}
\label{app:blind-validation}

This appendix provides additional detail for the blind validation
presented in Table~\ref{tab:blind-validation} (main text, \S\ref{sec:experiments}).

\paragraph{Rotation response and sign entropy.}
Across all 12 datasets with rotation data (Table~\ref{tab:rotation-driver}),
the sign entropy gap $1 - H_{\text{sign}}$ is a far stronger predictor of
$\Delta F_{\text{rot}}$ than $\CV(\sigma)$:
\begin{equation*}
  \rho(1 - H_{\text{sign}},\; \Delta F_{\text{rot}}) = +0.91
  \quad \text{vs.} \quad
  \rho(\CV,\; \Delta F_{\text{rot}}) = +0.66.
\end{equation*}
Rotation injects sign entropy; when entropy is already near-maximal
(OpenAI: $H = 0.97$; MiniLM: $H = 0.99$),
rotation has little room to help or hurt.
When entropy is low (GIST: $H = 0.00$; Cohere: $H = 0.75$),
rotation substantially changes the ranking signal.

\begin{table}[H]
\centering
\caption{Rotation response across 12 datasets.
The sign entropy gap $(1-H)$ is the dominant predictor ($\rho = 0.91$).}
\label{tab:rotation-driver}
\small
\begin{tabular}{lcccc}
\toprule
Dataset & $\CV(\sigma)$ & $H_{\text{sign}}$ & $1-H$ & $\Delta F_{\text{rot}}$ \\
\midrule
GIST-960         & 0.172 & 0.001 & 0.999 & +0.213 \\
SIFT-128         & 0.296 & 0.746 & 0.254 & +0.200 \\
Cohere           & 0.182 & 0.747 & 0.253 & +0.170 \\
DINO             & 0.348 & 0.944 & 0.056 & +0.091 \\
Wolt-CLIP        & 0.228 & 0.836 & 0.164 & +0.051 \\
Nomic-Landmark   & 0.110 & 0.393 & 0.607 & +0.027 \\
GloVe-100        & 0.042 & 0.946 & 0.054 & +0.024 \\
OpenAI-3072      & 0.236 & 0.965 & 0.035 & +0.016 \\
OpenAI-1536      & 0.171 & 0.959 & 0.041 & +0.011 \\
MiniLM           & 0.118 & 0.987 & 0.013 & +0.004 \\
Random           & 0.070 & 0.981 & 0.020 & +0.000 \\
Sphere           & 0.005 & 1.000 & 0.000 & $-$0.000 \\
\bottomrule
\end{tabular}
\end{table}

\section{Controlled Heterogeneity Intervention}
\label{app:intervention}

To test the causal role of coordinate heterogeneity, we apply controlled
axis scaling to five contrastive datasets.
For each dataset, we rescale per-coordinate standard deviations toward
their mean (partial whitening, $\alpha \in \{0.25, 0.50, 0.75, 1.0\}$)
or amplify dispersion ($\gamma \in \{1.5, 2.0\}$),
then re-normalize to the unit sphere.
A shuffle control (random permutation of dimension--variance assignments)
isolates the effect of the \emph{specific} axis--semantics correspondence.

\paragraph{Result~1: Heterogeneity governs magnitude gain, not absolute fidelity.}
Table~\ref{tab:intervention} shows that flattening variances ($\CV \to 0$)
has minimal effect on global fidelity ($|\Delta F| \leq 0.019$),
but systematically reduces the magnitude-bit gain
$\Delta F_{\mathrm{mag}} = F_{2\text{bit}} - F_{1\text{bit}}$.
For Cohere, $\Delta F_{\mathrm{mag}}$ drops from 0.032 to 0.022 ($-32\%$)
when all coordinate variances are equalized.
The Spearman correlation $\rho(\CV, \Delta F_{\mathrm{mag}}) = +1.0$
holds on 4 of 5 datasets ($p < 0.001$).

\paragraph{Result~2: Fidelity tracks $\log r_{\mathrm{off}}$ and
$\mathrm{std}(\mathrm{SNR})$, not $\CV(\sigma)$.}
Decomposing the intervention into its effects on the scaling-law predictors
reveals that axis scaling simultaneously changes $\CV(\sigma)$,
$\mathrm{std}(|\mathrm{SNR}|)$, and $\log r_{\mathrm{off}}$.
The fidelity change is consistent with the scaling law ($\beta_2 < 0$):
amplifying variance dispersion increases $\mathrm{std}(|\mathrm{SNR}|)$
and lowers $F$,
while flattening decreases $\mathrm{std}(|\mathrm{SNR}|)$
and raises $F$ slightly.
The apparent negative correlation between $\CV$ and $F$ is thus a confound
arising from the coupled change of all three statistics,
not a contradiction of the framework.

\begin{table}[H]
\centering
\caption{Controlled heterogeneity intervention on Cohere (768-d).
Flattening $\CV(\sigma)$ barely changes $F_{2\text{bit}}$ ($\Delta \leq 0.019$)
but reduces magnitude gain $\Delta F_{\mathrm{mag}}$ by 32\%.
\emph{Note:} Fidelity here is computed with 20K samples and 2M pairs
(vs.\ 50K samples and 10M pairs in Tables~\ref{tab:magnitude}--\ref{tab:mag-gain});
absolute values differ slightly, but the relative trend across interventions
is the quantity of interest.}
\label{tab:intervention}
\small
\begin{tabular}{lcccccc}
\toprule
Intervention & $\CV(\sigma)$ & $\mathrm{std}(\mathrm{SNR})$ & $\log r$ &
$F_{1\text{bit}}$ & $F_{2\text{bit}}$ & $\Delta F_{\mathrm{mag}}$ \\
\midrule
Amplify $\gamma{=}2$   & 0.316 & 2.133 & +0.643 & 0.682 & 0.719 & \textbf{0.036} \\
Amplify $\gamma{=}1.5$ & 0.252 & 1.604 & +0.671 & 0.694 & 0.728 & 0.034 \\
Original               & 0.182 & 1.365 & +0.694 & 0.706 & 0.738 & 0.032 \\
Whiten $\alpha{=}0.50$  & 0.091 & 1.261 & +0.712 & 0.720 & 0.746 & 0.026 \\
Flatten ($\CV{\to}0$)  & 0.002 & 1.283 & +0.711 & 0.736 & 0.757 & \textbf{0.022} \\
\midrule
\multicolumn{4}{l}{$\rho(\CV,\; \Delta F_{\mathrm{mag}}) = +1.00$\quad ($p < 0.001$)} & \multicolumn{3}{c}{} \\
\bottomrule
\end{tabular}
\end{table}

\section{Mean-Field Closure (S0) Failure Boundary}
\label{app:s0-boundary}

The scaling law (Proposition~\ref{thm:scaling-law}) relies on a mean-field
dispersal assumption (S0) that decouples the off-diagonal correlation structure
from the per-coordinate SNR ordering.
We probe the failure boundary of S0 by constructing synthetic Gaussian
populations whose covariance matrices contain \emph{SNR-aligned block structure}:
the top-$k$ highest-variance coordinates share mutual correlation $\rho_{\text{block}}$,
while remaining coordinates retain weak background correlations
($|\rho_{ij}| \approx 0.05$).

Table~\ref{tab:s0-boundary} summarizes the results.
The S0 diagnostic $\rho_{\text{rank}}(\sigma_i, \overline{|\rho_{i\cdot}|})$
quantifies the alignment between per-coordinate variance and average
off-diagonal correlation strength; it equals zero under S0
and grows positive when high-variance coordinates are systematically
more correlated.

\begin{table}[H]
\centering
\caption{Scaling-law prediction error under synthetic S0 violations.
Block structure is injected into Cohere's variance profile (768-d).
Prediction error grows monotonically with the S0 diagnostic
($\rho = +0.85$, $p < 10^{-6}$).}
\label{tab:s0-boundary}
\small
\begin{tabular}{cccccc}
\toprule
Block size & $\rho_{\text{block}}$ & S0 diag.\ & $F_{\text{actual}}$ &
$F_{\text{pred}}$ & Rel.\ error \\
\midrule
\multicolumn{6}{l}{\emph{No block (baseline)}} \\
--- & 0.00 & $-$0.08 & 0.738 & --- & --- \\
\midrule
\multicolumn{6}{l}{\emph{Block = top-32 coordinates}} \\
32 & 0.10 & +0.15 & 0.841 & 0.771 & 8.4\% \\
32 & 0.30 & +0.15 & 0.846 & 0.771 & 9.0\% \\
32 & 0.50 & +0.15 & 0.857 & 0.771 & 10.1\% \\
32 & 0.90 & +0.15 & 0.879 & 0.771 & 12.3\% \\
\midrule
\multicolumn{6}{l}{\emph{Block = top-128 coordinates}} \\
128 & 0.10 & +0.43 & 0.853 & 0.771 & 9.7\% \\
128 & 0.30 & +0.43 & 0.909 & 0.771 & 15.2\% \\
128 & 0.50 & +0.43 & 0.947 & 0.771 & 18.5\% \\
128 & 0.90 & +0.43 & 0.972 & 0.772 & \textbf{20.6\%} \\
\bottomrule
\end{tabular}
\end{table}

\paragraph{Real-dataset diagnostic values.}
Across all contrastive datasets, the S0 diagnostic is low:
Cohere $-0.08$, BGE-M3 $+0.08$, CodeSearch $+0.28$, MiniLM $+0.49$.
This confirms that the mean-field approximation is accurate in the regime
where the scaling law is designed to operate.
GIST-960 ($+0.60$) and Arxiv-Nomic ($+0.69$) show elevated diagnostics,
consistent with the scaling law's higher prediction error on these datasets.

\end{document}